\DeclareMathOperator*{\argmax}{arg\,max}
\theoremstyle{plain}
\newtheorem{theorem}{Theorem}
\newtheorem{lemma}{Lemma}
\newtheorem*{lemma*}{Lemma}
\newtheorem*{theorem*}{Theorem}
\newtheorem{assumption}{Assumption}
\theoremstyle{definition}
\theoremstyle{remark}
\def\mathhyphen{{\hbox{-}}}
\title{Real-Time Resource Allocation for Tracking Systems}
\author{ {\bf Yash Satsangi  } \\
University of Amsterdam \\
\And
{\bf Shimon Whiteson  } \\
University of Oxford \\
\And
{\bf Frans A. Oliehoek} \\
University of Liverpool \\ 
University of Amsterdam
\And
{\bf Henri Bouma }  \\
TNO, The Netherlands \\
}
\begin{document}
\maketitle

\begin{abstract}
Automated tracking is key to many computer vision applications. However, many tracking systems struggle to perform in real-time due to the high computational cost of detecting people, especially in ultra high resolution images. We propose a new algorithm called \emph{PartiMax} that greatly reduces this cost by applying the person detector only to the relevant parts of the image. PartiMax exploits information in the particle filter to select $k$ of the $n$ candidate \emph{pixel boxes} in the image. We prove that PartiMax is guaranteed to make a near-optimal selection with error bounds that are independent of the problem size. Furthermore, empirical results on a real-life dataset show that our system runs in real-time by processing only 10\% of the pixel boxes in the image while still retaining 80\% of the original tracking performance achieved when processing all pixel boxes.

\end{abstract}

\section{INTRODUCTION}

Automated tracking is a key component of countless computer vision applications such as maintaining surveillance, studying traffic flows, and counting the number of people in a scene \citep{surveySmeulders}. Consequently, in recent years many tracking systems have been proposed that make it possible to track people in a variety of challenging settings \citep{illumTrack,Benfold2011,surveySmeulders}. However, these approaches still cannot perform real-time tracking on ultra high resolution videos (e.g., $5000 \times 4000$ pixels). 

In particular, the \emph{detection} stage, i.e., identifying an object in a scene, is the main computational bottleneck for systems that work on the \emph{tracking-by-detection} principle \citep{Benfold2011}. For example, Figure~\ref{tnoScene} shows a wide-view scene recorded by a camera mounted on top of a building~\citep{Schutte2016}. Successful tracking depends on detecting the person in the image by applying a trained detector to many \emph{pixel boxes}. Since the scene records a wide landscape, the pixel boxes must be relatively small (e.g., $180 \times 180$), yielding approximately 7000 pixel boxes per image. Consequently, performing a \emph{brute force detection} (BD) that applies the person detector to all 7000 pixel boxes is extremely computationally intensive and prohibitive to do in real time.

\begin{figure}
\begin{center}
\includegraphics[scale=0.35]{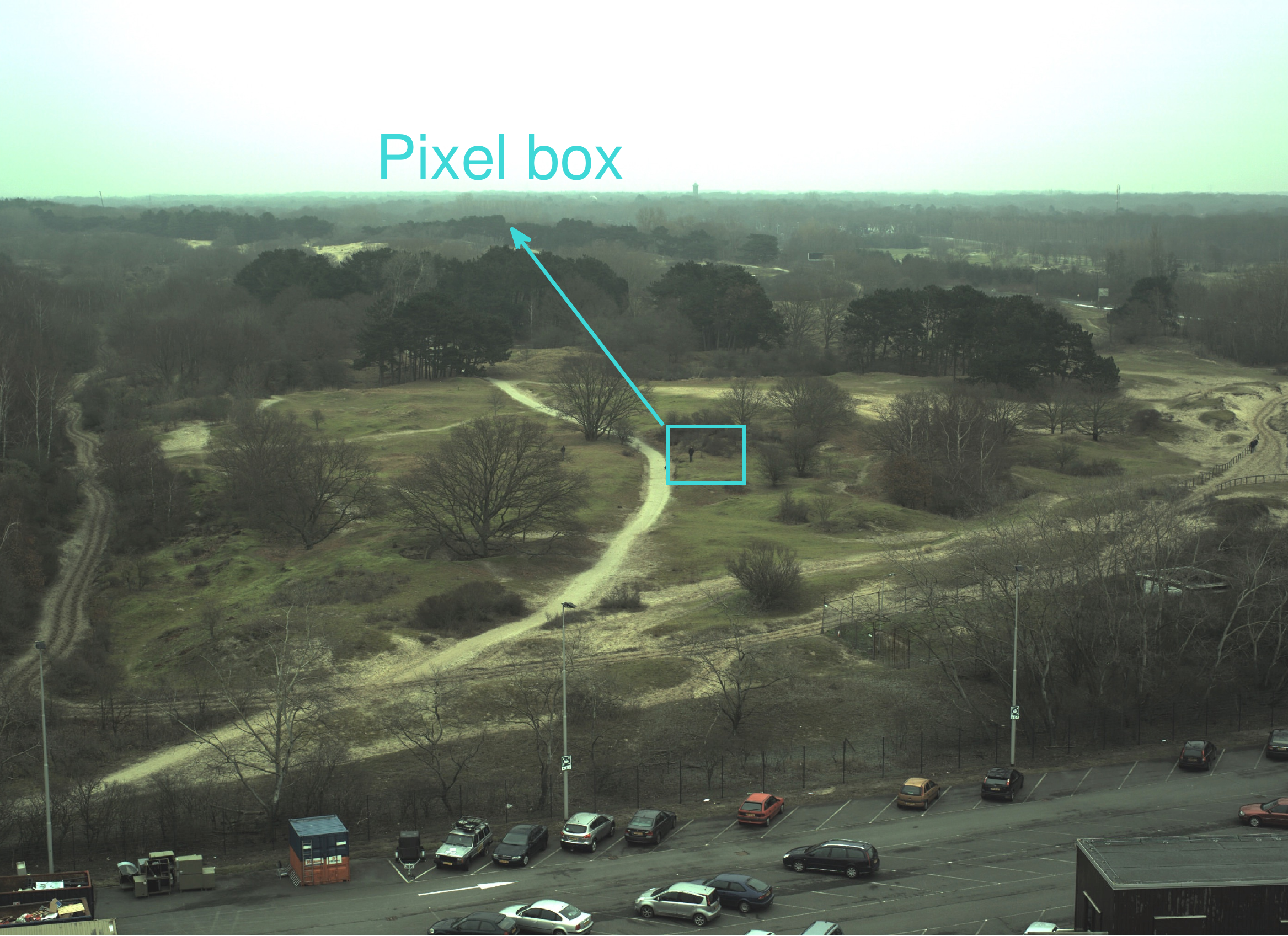}
\caption{A wide-view scene recorded by a rooftop camera; the cyan rectangle shows an example pixel box.} 
\label{tnoScene}
\end{center}
\end{figure}

\begin{figure}
\begin{center}
\vspace{-4mm}
\includegraphics[scale=0.35]{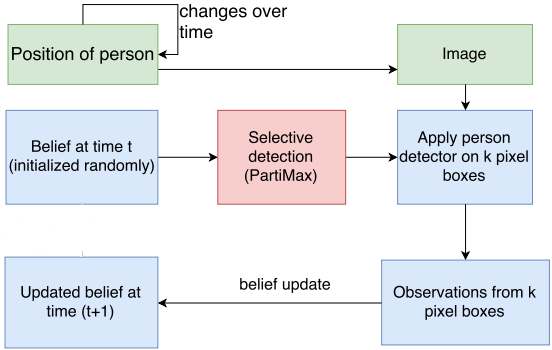}
\caption{Proposed tracking system with PartiMax, our proposed selective detection method, highlighted in red.} \label{fig:SD}
\end{center}
\end{figure}

In this paper, we propose a new tracking system that greatly reduces the cost of detection and thus enables real-time tracking on systems with ultra high resolution images or many cameras.  The main idea is to perform \emph{selective detection} (SD), i.e., apply the person detector not on all $n$ pixel boxes, but only a carefully selected subset of $k$ pixel boxes, while retaining performance guarantees, as shown in Figure \ref{fig:SD}. To do so, we build on existing techniques for \emph{sensor selection}, which select the $k$ out of $n$ sensors with the highest \emph{utility} in a multi-sensor network.  Sensor selection is challenging because there are $n \choose k$ ways to perform the selection, and computing the best one would use up the same scarce computational resources we aim to intelligently allocate.  Fortunately, when the utility function possesses certain characteristics, including \emph{submodularity}, a near-optimal selection can be found using \emph{greedy maximization}, which evaluates the utility function only $\mathcal{O}(n k)$, instead of $n \choose k$, times. In addition, \emph{stochastic greedy maximization} \citep{lazier} further reduces the number of evaluations of the utility function by evaluating the utility function only for a random subset of pixel boxes of size $r$, where $r << n$.

However, for selective detection in real-time, even stochastic greedy maximization is too expensive because computing typical utility functions such as \emph{information gain} or \emph{expected coverage} requires marginalizing out the observation that each candidate sensor would generate. In fact, in real-life settings with high dimensional state and/or observation spaces, evaluating information gain or expected coverage even once can be prohibitively expensive.

We start by proposing a utility function for selective detection called \emph{particle coverage} that approximates the probability of detecting a person in a given set of pixel boxes. We show that particle coverage approximates expected coverage under certain conditions, but is much faster to compute. Then, we propose \emph{PartiMax}. Unlike (stochastic) greedy maximization, which treats utility evaluation as a black-box, PartiMax maintains and updates the particle coverage of each pixel box in every iteration of greedy maximization, leading to large computational savings, as the particle coverage of each pixel box is not evaluated from scratch in each iteration. 
Furthermore, instead of selecting a subset of pixel boxes \emph{randomly} in every iteration like stochastic greedy maximization, PartiMax samples pixel boxes with high particle coverage leading to superior tracking performance.

Since sampling pixel boxes with high particle coverage without computing the particle coverage is not trivial, we propose a sampling algorithm that is guaranteed to sample a pixel box with probability directly proportional to its particle coverage. It does so by employing \emph{tile coding}, a popular representation in reinforcement learning that discretizes continuous spaces.

We show that, given access to a sampling algorithm like the one we propose, PartiMax is guaranteed to return a solution with tight error bounds that are independent of the problem size, i.e., independent of both $n$ and $k$. Although PartiMax is designed for the particle coverage function, our bound applies generally to maximization over a set function.

Finally, we use PartiMax for selective detection to build a real-time tracking system, which we apply to a real-life dataset. Our results show that our tracking system retains 80\% of its performance despite processing only 10\% of each image and running in real time.

\section{RELATED WORK}
Most detection systems, e.g., \citep{Felzenszwalb2010,Dollar2014}, including those based on convolutional neural networks \citep{Tian2015,Redmon2016}, process the whole image and are thus not computationally efficient enough for our setting, due to the high resolution and depth of scene of the images.

Some work does identify the relevant region of interest in an image
\citep{detectingROI}, e.g., by generating \emph{proposals}
(see \citep{Hosang2015} and references therein) or saliency points \citep{saliency}. These methods, however, are based on the
properties (or low-level features) of the entire image (since they do
not consider the belief state) and often generate thousands of
proposals/saliency points per image. In fact, selective detection can be coupled with these methods to selectively generate saliency points.

Recently developed models of visual attention \citep{recurrent,learningWTL} come closest to our work in spirit. However, they use model-free deep reinforcement learning methods to identify relevant region to apply a trained detector on \citep{recurrent}, while we learn the model of the world from the data and use it to plan online to find the relevant regions to which to apply a trained detector.

Our work builds off the vast existing sensor selection literature.  Most work uses utility functions involving information gain \citep{InformationSS,inforTheoSS,pacGreedy} and expected coverage \citep{spaan}, which are too expensive for real-time systems. Other approaches do not consider partial observability \citep{natarajan} or do not scale to large state and observation spaces \citep{natarajan,satsangi15}. Methods based on dynamic programming \citep{dynamicPro} or linear programming \citep{sensorManage} or focusing on occlusions \citep{costApproach} are also limited to smaller state and observation spaces.

For submodular function maximization, the most related methods are those of \citep{lazier} and \citep{thresholdgreedy}. We significantly improve upon these methods for sensor selection by introducing a novel method with lower computational cost and thereby making them applicable to real-time tracking.

\section{BACKGROUND}

\subsection{BASIC SETUP}

Let $\mathcal{X} = \{1,2 \dots n\}$ denote the set of all pixel boxes and $i$ denote a single pixel box in $\mathcal{X}$. $\mathcal{A}^{+}$ denotes the set of all possible subsets of $\mathcal{X}$ of size less than or equal to $k$, $\mathcal{A}^{+} = \{\mathcal{A} \subseteq \mathcal{X}: |\mathcal{A}| \leq k\}$. For the image shown in Figure \ref{tnoScene} the size of one pixel box was chosen to be 180 x 180 pixels. The true location of the person is a hidden variable denoted by $s$ and $S$ is the set of all possible values $s$ can take.\footnote{For simplicity, we sometimes assume there is only one person in the scene and the hidden variable is a vector in the Euclidean space. However, our methods and theoretical results extend easily to multiple people, as shown in Section \ref{sec:exp}. }  The observation vector $\mathbf{z} = \langle z_1, z_2 \dots z_{n}\rangle$ denotes the result of applying the detector to each pixel box, i.e., each $z_i$ denotes an estimate of whether a person appears in the pixel box $i$. If a pixel box $i$ is not selected for detection, then $z_i = \emptyset$. $\Omega$ is the set of all possible values $\mathbf{z}$ can take. The \emph{belief} $b(s)$ is a probability distribution over $s$. Given $\mathcal{A}$ and $\mathbf{z}$, $b(s)$ can be updated using Bayes rule. 

When there are many possible states, it is not possible to maintain $b(s)$ exactly. Thus, we use particle filters, described below, to maintain and update belief $b(s)$. Below we also describe greedy maximization, which is essential to our setup, as it selects $k$ pixel boxes out of $n$ in $\mathcal{O}(n \times k)$ time instead of $\mathcal{O}$$n \choose k$. 


\subsection{PARTICLE FILTERS}

When there are many possible states, it is infeasible to update $b(s)$ exactly. Instead, we can use \emph{particle filters} \citep{particlef}, sequential Monte Carlo algorithms for approximate inference in partially observable scenarios that are commonly used to track people in complex situations. The true belief $b(s)$ is approximated with a \emph{particle belief} $\mathcal{B}$, a collection of $m$ samples from $b(s)$, called particles: $\mathcal{B} = \{s_1, s_2 \dots s_m \}$. Although weighted particle filters are often used for tracking, we use an unweighted particle filter since it can be  efficiently implemented with a black-box simulator without the need to explicitly model the accuracy of the person detector or the motion dynamics.

Given the particle belief $\mathcal{B}$, a subset of sensors $\mathcal{A}$ and observation $\mathbf{z}$, particle beliefs can be updated using a \emph{Monte Carlo belief update} \citep{pomcp}. For each particle $s_l \in \mathcal{B}$, the next state $s_l'$ is sampled from $\Pr(s'|s)$ (under the Markov assumption) to form $\mathcal{B}': \{s_l' : s_l' \sim \Pr(s_l'|s_l) \wedge s_l \in \mathcal{B} \}$. With enough samples, $\mathcal{B}'$ approximates the probability distribution: $b'(s') = \sum_{s \in S} \Pr(s'|s)b(s)$.

For each $s_l' \in \mathcal{B}'$, the corresponding $\mathbf{z}_l$ is drawn from $\Pr(\mathbf{z}|s_l',\mathcal{A})$. If $\mathbf{z}_l=\mathbf{z}$, then $s'_l$ is added to the updated belief $\mathcal{B}^{\mathcal{A}}_{\mathbf{z}}$. Otherwise, the particle is discarded.  To avoid particle degeneracy, a common problem with particle filters, we combine the belief update with new particles introduced by adding random particles sampled from $S$ to the existing particle set. $\mathcal{B}^{\mathcal{A}}_{\mathbf{z}}$ approximates the probability distribution $b^{\mathcal{A}}_{\mathbf{z}}(s') = \frac{\Pr(\mathbf{z}| \mathcal{A}, s') b'(s')}{\Pr(\mathbf{z}|b, \mathcal{A})}$.

\subsection{GREEDY MAXIMIZATION}
Given a set function $F(\mathcal{A})$, where $\mathcal{A} \in \mathcal{A}^+$, \emph{greedy maximization} computes $\mathcal{A}^{G}$, which approximately maximizes $F$ by building a subset of $k$ pixel boxes iteratively. In particular, in each of its $k$ iterations, greedy maximization adds to a partial solution the pixel box that maximizes the \emph{marginal gain}:
\begin{equation}
\Delta_{F}(i|\mathcal{A}) = F(\mathcal{A} \cup i) - F(\mathcal{A}),
\end{equation}
of adding $i$ to $\mathcal{A}$, i.e., it adds $\argmax_{i \in \mathcal{X} \setminus \mathcal{A}^{G}}\Delta_{F}(i|\mathcal{A}^{G})$ to $\mathcal{A}^{G}$ as shown in Algorithm \ref{GM}. 

\begin{algorithm}
\caption{$\mathtt{greedyMax}(F,\mathcal{X},k)$}\label{GM}
\begin{algorithmic}[1]
\State $\mathcal{A}^{G} \gets \emptyset$
\For {$l = 1 \ to \ k$}
\State $\mathcal{A}^{G} \gets \mathcal{A}^{G} \cup \argmax_{i \in \mathcal{X} \setminus \mathcal{A}^{G}}\Delta_{F}(i|\mathcal{A}^{G})$
\EndFor
\State return $\mathcal{A}^{G}$
\end{algorithmic}
\end{algorithm}

\citet{nemhauser} showed that greedy maximization is guaranteed to have bounded error under certain conditions: 
\begin{theorem} \citep{nemhauser} \label{th:nemhauser}
If $F$ is non-negative, monotone and submodular%
, then
$F(\mathcal{A}^{G}) \geq (1 - e^{-1}) F(\mathcal{A}^{*})$, where $\mathcal{A}^* = \arg\max_{\mathcal{A} \in \mathcal{A}^+}F(A)$.
\end{theorem}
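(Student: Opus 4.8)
The plan is to follow the classical telescoping argument that compares the greedy solution against the optimum one iteration at a time. First I would introduce notation for the intermediate greedy sets: let $\mathcal{A}^{G}_{l}$ denote the partial solution built by $\mathtt{greedyMax}$ after $l$ iterations, so that $\mathcal{A}^{G}_{0} = \emptyset$ and $\mathcal{A}^{G}_{k} = \mathcal{A}^{G}$. The entire argument then revolves around controlling how the residual gap between $F(\mathcal{A}^{*})$ and $F(\mathcal{A}^{G}_{l})$ contracts from one step to the next.

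The key step is to bound the optimum in terms of any intermediate greedy set. Using monotonicity, $F(\mathcal{A}^{*}) \leq F(\mathcal{A}^{*} \cup \mathcal{A}^{G}_{l})$. Writing the right-hand side as a telescoping sum of marginal gains obtained by adding the elements of $\mathcal{A}^{*} \setminus \mathcal{A}^{G}_{l}$ one at a time on top of $\mathcal{A}^{G}_{l}$, and then invoking submodularity to bound each such gain (which is added on top of a superset of $\mathcal{A}^{G}_{l}$) by its gain over $\mathcal{A}^{G}_{l}$ alone, I obtain
\[
F(\mathcal{A}^{*}) \leq F(\mathcal{A}^{G}_{l}) + \sum_{j \in \mathcal{A}^{*} \setminus \mathcal{A}^{G}_{l}} \Delta_{F}(j | \mathcal{A}^{G}_{l}).
\]
Since $|\mathcal{A}^{*}| \leq k$ and greedy selects the element of maximum marginal gain at iteration $l+1$, every summand is at most $F(\mathcal{A}^{G}_{l+1}) - F(\mathcal{A}^{G}_{l})$, so that $F(\mathcal{A}^{*}) \leq F(\mathcal{A}^{G}_{l}) + k\bigl(F(\mathcal{A}^{G}_{l+1}) - F(\mathcal{A}^{G}_{l})\bigr)$.

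Next I would convert this into a recursion for the residual gap $\delta_{l} = F(\mathcal{A}^{*}) - F(\mathcal{A}^{G}_{l})$. The inequality rearranges to $\delta_{l+1} \leq (1 - 1/k)\,\delta_{l}$, and unrolling over all $k$ iterations gives $\delta_{k} \leq (1 - 1/k)^{k}\,\delta_{0}$. Non-negativity of $F$ yields $\delta_{0} = F(\mathcal{A}^{*}) - F(\emptyset) \leq F(\mathcal{A}^{*})$, and the elementary bound $(1 - 1/k)^{k} \leq e^{-1}$ then produces $F(\mathcal{A}^{*}) - F(\mathcal{A}^{G}) \leq e^{-1} F(\mathcal{A}^{*})$, which is exactly the claimed guarantee after rearranging.

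I expect the main obstacle to be the submodularity step: one must fix an ordering of the elements of $\mathcal{A}^{*} \setminus \mathcal{A}^{G}_{l}$, express $F(\mathcal{A}^{*} \cup \mathcal{A}^{G}_{l})$ as the corresponding telescoped sum of marginal gains, and then justify that each term, taken over a growing superset of $\mathcal{A}^{G}_{l}$, is dominated by the gain of that same element over $\mathcal{A}^{G}_{l}$ itself. Everything afterward is routine algebra together with the standard inequality $(1 - 1/k)^{k} \leq e^{-1}$.
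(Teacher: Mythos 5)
Your argument is correct, and it is the classical Nemhauser--Wolsey--Fisher proof. Every step checks out: monotonicity gives $F(\mathcal{A}^{*}) \leq F(\mathcal{A}^{*} \cup \mathcal{A}^{G}_{l})$; fixing an order on $\mathcal{A}^{*} \setminus \mathcal{A}^{G}_{l}$, telescoping, and applying submodularity term by term gives $F(\mathcal{A}^{*}) \leq F(\mathcal{A}^{G}_{l}) + \sum_{j \in \mathcal{A}^{*} \setminus \mathcal{A}^{G}_{l}} \Delta_{F}(j|\mathcal{A}^{G}_{l})$; the greedy choice together with $|\mathcal{A}^{*}| \leq k$ yields the contraction $\delta_{l+1} \leq (1 - 1/k)\,\delta_{l}$; and non-negativity of $F(\emptyset)$ plus $(1 - 1/k)^{k} \leq e^{-1}$ finish it. The one point worth making explicit is that bounding a sum of \emph{at most} $k$ terms by $k$ times the greedy increment silently uses the fact that the greedy increment $F(\mathcal{A}^{G}_{l+1}) - F(\mathcal{A}^{G}_{l})$ is non-negative, which follows from monotonicity; without that, a sum with fewer than $k$ terms could exceed $k$ times a negative maximum.

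One caveat on the comparison: the paper does not prove this theorem at all --- it imports it as background from \citet{nemhauser} --- so there is no in-paper proof to measure yours against. That said, your skeleton is precisely the template the paper reuses when proving its own PartiMax guarantee: the same monotonicity-plus-telescoping-plus-submodularity step appears there (in expectation, with $\mathcal{A}^{S}_{m}$ in place of your $\mathcal{A}^{G}_{l}$) as the inequality $\mathbb{E}[f(\mathcal{A}^{S}_{m})] + \sum_{i \in \mathcal{A}^{P}} \mathbb{E}[\Delta_{f}(i|\mathcal{A}^{S}_{m})] \geq f(\mathcal{A}^{*})$, and your unrolling of the recursion becomes an induction on $m$ carrying an extra $(r/(r+1))^{r}$ slack term contributed by the sampling step. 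So your write-up is both correct and, in effect, the special case ($\mathcal{R} = \mathcal{X} \setminus \mathcal{A}^{S}$, no sampling error) of the analysis the paper actually carries out.
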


Submodularity is a property of set functions that formalizes the notion of diminishing returns: $F : 2^{\mathcal{X}} \to \mathbb{R}$ is submodular if for every $\mathcal{A}_{M} \subseteq \mathcal{A}_{N} \subseteq \mathcal{X}$ and $i \in \mathcal{X} \setminus \mathcal{A}_N$,
\begin{equation}
\Delta_{F}(i|\mathcal{A}_{M}) \geq \Delta_{F}(i|\mathcal{A}_{N}).
\end{equation}
 
Thus, the marginal gain of adding an element to a smaller set $\mathcal{A}_{M}$ is always greater than or equal to the marginal gain of adding the same element to a bigger subset $\mathcal{A}_{N}$ such that $\mathcal{A}_{M} \subseteq \mathcal{A}_{N} \subseteq \mathcal{X}$. If this is true for all possible values of $\mathcal{A}_{N}, \mathcal{A}_{M}$, and $i$, then $F$ is submodular.

\subsection{STOCHASTIC GREEDY MAXIMIZATION}

\emph{Stochastic greedy maximization}, shown in Algorithm \ref{stoGM}, further reduces costs by randomly sampling a subset $\mathcal{R}$ of size $r$ from $\mathcal{X}$ in each iteration of greedy maximization and then selecting the element from $\mathcal{R}$ that maximizes the marginal gain. It computes a subset 
$\mathcal{A}^{S}$ by adding in each iteration $\argmax_{i \in \mathcal{R}}\Delta_{F}(i|\mathcal{A}^{S})$, where $\mathcal{R}$ is a subset of $\mathcal{X} \setminus \mathcal{A}^{S}$ of size $r$. \citet{lazier} showed that stochastic greedy maximization is also guaranteed to have bounded error.

\begin{algorithm} 
\caption{$\mathtt{stochastic}\mathhyphen\mathtt{greedy}\mathhyphen\mathtt{max}(F,\mathcal{X},k,r)$}\label{stoGM}
\begin{algorithmic}[1]
\State $\mathcal{A}^{S} \gets \emptyset$
\For {$m = 1 \ to \ k$}
\State $\mathcal{R} \gets$ random sample of size $r$ from $\mathcal{X} \setminus \mathcal{A}^{S}$.
\State $\mathcal{A}^{S} \gets \mathcal{A}^{S} \cup \argmax_{i \in \mathcal{R}}\Delta_{F}(i|\mathcal{A}^{S})$
\EndFor
\State return $\mathcal{A}^{S}$
\end{algorithmic}
\end{algorithm}

\begin{theorem} \citep{lazier} \label{thm:lazier}
If $F$ is non-negative, monotone and submodular%
, then
$\mathbb{E}[F(\mathcal{A}^{S})] \geq (1 - e^{-1} - \epsilon) F(\mathcal{A}^{*})$,
where $r = \frac{n}{k}\log(\frac{1}{\epsilon})$.
\end{theorem}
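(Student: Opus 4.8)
The plan is to follow the standard template for analyzing greedy-type algorithms on submodular functions: isolate a single iteration, prove a one-step expected-gain lemma, and then telescope across the $k$ iterations. Write $A_t$ for the partial solution after $t$ iterations (so $A_0 = \emptyset$), let $\mathcal{R}_t$ be the random sample drawn in iteration $t$, and let $a_t = \argmax_{i\in\mathcal{R}_t}\Delta_F(i|A_{t-1})$ be the element added. The crux is the following one-step bound: conditioned on $A_{t-1} = A$,
\[
\mathbb{E}_{\mathcal{R}_t}[\Delta_F(a_t\,|\,A)] \geq \frac{1 - e^{-rk/n}}{k}\bigl(F(\mathcal{A}^*) - F(A)\bigr).
\]

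To establish this lemma I would set $O = \mathcal{A}^*\setminus A$ and observe that $a_t$ maximizes the marginal gain over all of $\mathcal{R}_t$, so whenever $\mathcal{R}_t\cap O\neq\emptyset$ we have $\Delta_F(a_t|A)\geq\max_{o\in\mathcal{R}_t\cap O}\Delta_F(o|A)$, while non-negativity and monotonicity give $\Delta_F(a_t|A)\geq 0$ otherwise. Next I would lower-bound $\mathbb{E}[\max_{o\in\mathcal{R}_t\cap O}\Delta_F(o|A)]$ by $\Pr[\mathcal{R}_t\cap O\neq\emptyset]\cdot\frac{1}{|O|}\sum_{o\in O}\Delta_F(o|A)$, using the symmetry that, conditioned on its size, $\mathcal{R}_t\cap O$ is a uniformly random subset of $O$, so the maximum gain over it dominates the gain of a single uniformly random element of $O$. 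A direct counting argument gives $\Pr[\mathcal{R}_t\cap O\neq\emptyset] = 1 - \binom{v-|O|}{r}/\binom{v}{r}\geq 1 - (1-|O|/v)^r\geq 1 - e^{-r|O|/n}$, where $v = n-|A|\leq n$. Finally, since $x\mapsto(1-e^{-rx/n})/x$ is decreasing (which follows from $e^{-u}(1+u)\leq 1$) and $|O|\leq k$, the factor $(1-e^{-r|O|/n})/|O|$ is at least $(1-e^{-rk/n})/k$; combining this with the submodular inequality $\sum_{o\in O}\Delta_F(o|A)\geq F(A\cup\mathcal{A}^*) - F(A)\geq F(\mathcal{A}^*)-F(A)$ yields the lemma.

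With the lemma in hand I would telescope. Writing $\delta_t = F(\mathcal{A}^*) - \mathbb{E}[F(A_t)]$ and taking the expectation over $A_{t-1}$, the lemma rearranges to $\delta_t\leq\bigl(1 - \tfrac{1-e^{-rk/n}}{k}\bigr)\delta_{t-1}$, so after $k$ iterations $\delta_k\leq\bigl(1-\tfrac{1-e^{-rk/n}}{k}\bigr)^k F(\mathcal{A}^*)$, using $\delta_0\leq F(\mathcal{A}^*)$ since $F(\emptyset)\geq 0$. Substituting $r = \frac{n}{k}\log(\tfrac{1}{\epsilon})$ makes $e^{-rk/n} = \epsilon$, and applying $1-x\leq e^{-x}$ gives $\bigl(1-\tfrac{1-\epsilon}{k}\bigr)^k\leq e^{-(1-\epsilon)} = e^{-1}e^{\epsilon}$. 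A final elementary estimate $e^{-1}(e^\epsilon - 1)\leq\epsilon$ (valid for $\epsilon\in[0,1]$ because $e^\epsilon - 1\leq\epsilon e$) converts this to $\delta_k\leq(e^{-1}+\epsilon)F(\mathcal{A}^*)$, which is exactly $\mathbb{E}[F(\mathcal{A}^{S})]\geq(1-e^{-1}-\epsilon)F(\mathcal{A}^*)$.

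I expect the main obstacle to be the one-step lemma, and within it the symmetry argument reducing the expected maximum over $\mathcal{R}_t\cap O$ to the average marginal gain over $O$: care is needed to condition correctly on $|\mathcal{R}_t\cap O|$ and to argue that a uniformly random element of the intersection is uniform over $O$ (with the degenerate case $O=\emptyset$ handled separately, where the bound is trivial by monotonicity). The monotonicity of $(1-e^{-rx/n})/x$ in $|O|$ is the other delicate point, since it is precisely what lets the bound depend on $k$ rather than on the a priori unknown size of $\mathcal{A}^*\setminus A$; everything after the lemma is routine telescoping and substitution of $r$.
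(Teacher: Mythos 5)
Your proposal is correct, but note that the paper itself offers no proof of this statement: it is quoted as background from \citet{lazier}, and your argument is essentially the standard one from that reference (the one-step expected-gain lemma via the sampling-probability bound, the symmetry/averaging step over $\mathcal{A}^*\setminus A$, and the submodular bound $\sum_{o}\Delta_F(o|A)\geq F(\mathcal{A}^*)-F(A)$, followed by telescoping and substituting $r=\frac{n}{k}\log(\frac{1}{\epsilon})$). All the delicate points you flag — the conditioning on $|\mathcal{R}_t\cap O|$, the degenerate case $O=\emptyset$, and the monotonicity of $x\mapsto(1-e^{-rx/n})/x$ — are handled correctly, so there is no gap.
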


\subsection{UTILITY FUNCTIONS}
For tracking tasks, $F$ is often defined as \emph{information gain} \citep{cover,igkrause,InformationSS}:
\begin{equation}
	IG_{b}(\mathcal{A}) = H_{b}(s) - H_{b}^{\mathcal{A}}(s|\mathbf{z}),
\end{equation}	
where $H_{b}(s)$ is the \emph{entropy} of $s$ and $H_{b}^{\mathcal{A}}(s|\mathbf{z})$ is the \emph{conditional entropy} of $s$ given $\mathbf{z}$ \citep{cover}.

It can also be defined as \emph{expected coverage} \citep{spaan}. Let $\mathcal{I}_{\mathcal{B}'}^j$ be the set of particles in $\mathcal{B}'$ that are \emph{covered} by pixel box $j$, $\mathcal{I}^{j}_{\mathcal{B}'} = \{s' \in \mathcal{B}': \mbox{$j$ covers $s'$} \}$. A pixel box $j$ covers $s'$ if a person in state $s'$ is visible in pixel box $j$. The expected coverage is defined as:
\begin{equation} \label{eq:expCov}
F_{\mathcal{B}'}(\mathcal{A}) = \sum_{\mathbf{z}}\Pr(\mathbf{z}|\mathcal{B}',\mathcal{A})f_{\mathcal{B}^{\mathcal{A}}_{\mathbf{z}}}(\mathcal{A}),
\end{equation}
where $f_{\mathcal{B}}(\mathcal{A}) = |\cup_{j \in \mathcal{A}} \mathcal{I}^{j}_{\mathcal{B}}|$. Expected coverage belongs to a general class of coverage functions that have been widely considered \citep{spaan}. In tracking, expected coverage is suitable because of the presence of partial observability, necessitating the expectation across $\mathbf{z}$. Expected coverage is appropriate for sensor selection or selective detection because it rewards selecting pixel boxes that have the highest probability of detecting a target. The underlying assumption is that the observations generated by the person detector are informative enough to detect a person correctly when present inside the pixel box, and are not informative enough if a person is absent from the pixel box. This is barely a restrictive assumption, as most useful person detectors satisfy it.

\section{PARTICLE COVERAGE UTILITY FUNCTION}

The utility functions described above are too expensive to compute in many practical settings, as they require marginalizing out observations, which is infeasible for real-time systems. In this section, we propose the \emph{particle coverage function} (PCF) for selective detection, which does not require computing $\mathcal{B}^{\mathcal{A}}_{\mathbf{z}}$ and approximates expected coverage.  PCF is defined as follows:
\begin{equation}
PCF_{\mathcal{B}'}(\mathcal{A}) = f_{\mathcal{B'}}(\mathcal{A}) = |\cup_{j \in \mathcal{A}}\mathcal{I}^{j}_{\mathcal{B}'}|.
\end{equation}

\begin{figure}
\begin{center}
\includegraphics[scale=0.6]{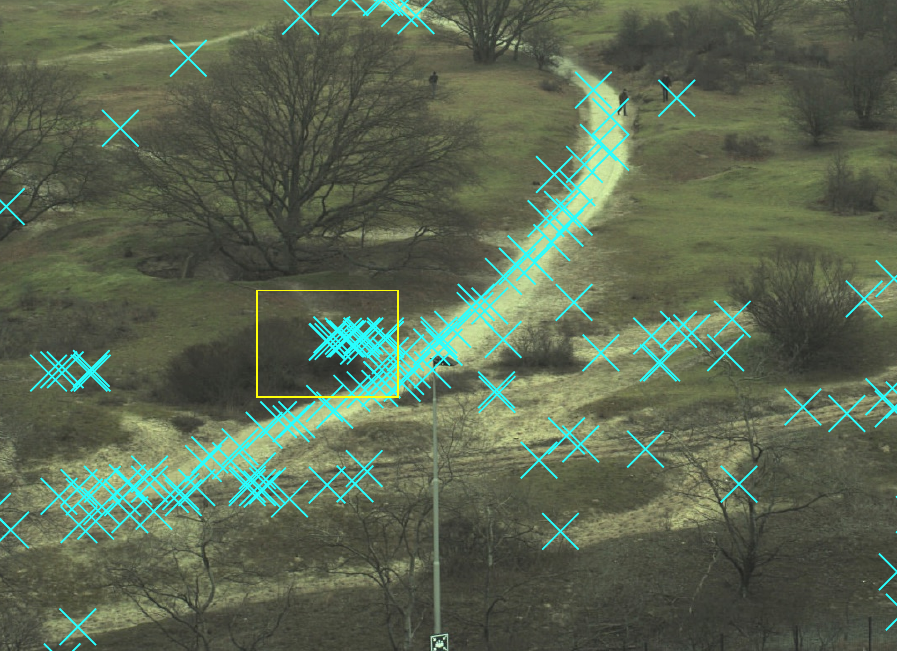}
\caption{Particle belief: the yellow rectangle shows a pixel box and the particles it covers.}
\label{fig:ut}
\end{center}
\end{figure}

$PCF_{\mathcal{B}'}(\mathcal{A})$ is simply the number of particles in $\mathcal{B'}$ that are covered by $\mathcal{A}$. In Figure~\ref{fig:ut}, the particle coverage is the number of cyan particles that fall in the yellow pixel box. As opposed to expected coverage $F_{\mathcal{B}'}$, PCF does not involve an expectation over $\mathbf{z}$ nor does it require computing the resulting beliefs $\mathcal{B}_{\mathbf{z}}^{\mathcal{A}}$. 
PCF equals expected coverage under certain conditions, including the following.

\begin{assumption} \label{as:as2}
For every $s' \in S$, $\mathcal{A} \subseteq \mathcal{X}$, there exist $\mathbf{z}_{s',\mathcal{A}}$ and $\mathbf{\bar{z}}_{s',\mathcal{A}}$ in $\Omega$ such that if $s'$ is covered by $\mathcal{A}$, $\Pr(\mathbf{z}_{s',\mathcal{A}}|s',\mathcal{A}) = 1$ and if $s'$ is not covered by $\mathcal{A}$, then $\Pr(\mathbf{\bar{z}}_{s',\mathcal{A}}|s',\mathcal{A}) = 1$.
\end{assumption}

This assumption implies that any partial observability is due to perceptual aliasing, not noise in the sensors.  Given Assumption \ref{as:as2}, it is straightforward to show that expected coverage is equal to the particle coverage. 
\begin{theorem} \label{th:pcufEqCov}
If Assumption \ref{as:as2} holds for a given $\mathcal{A}$, then $F_{\mathcal{B}'}(\mathcal{A}) = PCF_{\mathcal{B}'}(\mathcal{A})$. 
\end{theorem}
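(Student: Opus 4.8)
The plan is to expand the definition of expected coverage, $F_{\mathcal{B}'}(\mathcal{A}) = \sum_{\mathbf{z}}\Pr(\mathbf{z}\mid\mathcal{B}',\mathcal{A})\,f_{\mathcal{B}^{\mathcal{A}}_{\mathbf{z}}}(\mathcal{A})$, and to show that Assumption~\ref{as:as2} removes all of the genuine randomness in the observation, so that the expectation over $\mathbf{z}$ collapses to a deterministic quantity equal to the particle count $f_{\mathcal{B}'}(\mathcal{A}) = PCF_{\mathcal{B}'}(\mathcal{A})$. The engine is essentially the law of total expectation, but the content lies in checking that the posterior beliefs $\mathcal{B}^{\mathcal{A}}_{\mathbf{z}}$ behave cleanly under the assumption.

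First I would use Assumption~\ref{as:as2} to argue that $\Pr(\mathbf{z}\mid s',\mathcal{A})$ is $\{0,1\}$-valued: each particle $s'\in\mathcal{B}'$ emits exactly one observation, namely $\mathbf{z}_{s',\mathcal{A}}$ if $s'$ is covered by $\mathcal{A}$ and $\bar{\mathbf{z}}_{s',\mathcal{A}}$ otherwise. This turns the Monte Carlo belief update into a deterministic partition of $\mathcal{B}'$: the observations split into \emph{covering} observations (those emitted by covered particles) and \emph{non-covering} observations, and $\mathcal{B}^{\mathcal{A}}_{\mathbf{z}}$ is exactly the block of particles emitting $\mathbf{z}$. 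The key structural fact to establish is \emph{purity}: a covering observation can only be produced by a covered particle, so every particle retained in $\mathcal{B}^{\mathcal{A}}_{\mathbf{z}}$ for a covering $\mathbf{z}$ is covered by $\mathcal{A}$, while $\mathcal{B}^{\mathcal{A}}_{\mathbf{z}}$ contains no covered particle for a non-covering $\mathbf{z}$. This is where the informativeness built into the assumption (a detection inside a box in $\mathcal{A}$ cannot be generated by a state outside $\mathcal{A}$) does the work.

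Given purity, I would evaluate the two kinds of terms. For a non-covering $\mathbf{z}$ we have $f_{\mathcal{B}^{\mathcal{A}}_{\mathbf{z}}}(\mathcal{A}) = 0$, so those terms drop out. For a covering $\mathbf{z}$, once the belief update restores the particle set to its fixed size $m$, every surviving particle is covered, so $f_{\mathcal{B}^{\mathcal{A}}_{\mathbf{z}}}(\mathcal{A}) = m$. Hence $F_{\mathcal{B}'}(\mathcal{A}) = m\sum_{\mathbf{z}\text{ covering}}\Pr(\mathbf{z}\mid\mathcal{B}',\mathcal{A})$. Finally I would identify $\sum_{\mathbf{z}\text{ covering}}\Pr(\mathbf{z}\mid\mathcal{B}',\mathcal{A})$ with the prior probability that a particle is covered, which for the unweighted particle belief is just $f_{\mathcal{B}'}(\mathcal{A})/m$; multiplying by $m$ returns $f_{\mathcal{B}'}(\mathcal{A}) = PCF_{\mathcal{B}'}(\mathcal{A})$.

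The main obstacle is the middle bookkeeping step rather than the tower property itself: I must be careful about how the per-observation coverage $f_{\mathcal{B}^{\mathcal{A}}_{\mathbf{z}}}(\mathcal{A})$ is weighted by $\Pr(\mathbf{z}\mid\mathcal{B}',\mathcal{A})$. A naive ``subset'' view of the belief update, in which one keeps only the consistent particles without restoring $m$, does \emph{not} give the identity, because the coverage count and the observation probability both scale with the block size and their product picks up an extra factor of that size. The identity relies on the particle filter renormalizing the belief back to $m$ particles, so that $f_{\mathcal{B}^{\mathcal{A}}_{\mathbf{z}}}$ effectively measures a proportion rather than a raw count; stating this normalization precisely, and confirming that purity survives it, is the delicate part of the argument.
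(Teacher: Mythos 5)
Your proof is correct and follows essentially the same route as the paper's: expand the expectation over observations, use Assumption~\ref{as:as2} to show that non-covering observations contribute zero coverage while covering observations yield coverage $m$, and then identify the total probability of a covering observation with $PCF_{\mathcal{B}'}(\mathcal{A})/m$. Your explicit discussion of purity and of the particle filter renormalizing back to $m$ particles makes precise two steps that the paper's proof leaves implicit, but it is the same argument.
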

\begin{proof}
Expected coverage can be expressed as $F_{B'}(\mathcal{A}) = \sum_{\mathbf{z} \in \Omega}\Pr(\mathbf{z}|\mathcal{B'},\mathcal{A}) f_{\mathcal{B}^{\mathcal{A}}_{\mathbf{z}}}(\mathcal{A})$. In case a negative detection is observed, that is the person is not in the space covered by $\mathcal{A}$ the resulting belief will not have any particle within the space covered by $\mathcal{A}$ due to Assumption \ref{as:as2} and thus resulting coverage is zero. If a positive detection is observed, that is the person is inside the space covered by $\mathcal{A}$ then all the particles in resulting belief will fall within the space that is covered by $\mathcal{A}$ resulting:
$f_{\mathcal{B}^{\mathcal{A}}_{\mathbf{z}}}(\mathcal{A}) = m. $
This implies, 
$F_{B'}(\mathcal{A}) = \sum_{\mathbf{z} \in \Omega}\Pr(\mathbf{z}|\mathcal{B'},\mathcal{A}) m$, 
where $\mathbf{z}$ is a positive observation that can be obtained only if a state is covered by $\mathcal{A}$. The probability of getting a positive detection according to $\mathcal{B'}$ is the sum of particles covered by $\mathcal{A}$ in $\mathcal{B'}$ divided by $m$. Thus, $F_{\mathcal{B'}}(\mathcal{A}) = \frac{PCF_{\mathcal{B'}}{(\mathcal{A})}}{m} \times m = PCF_{\mathcal{B'}}(\mathcal{A})$.
\end{proof}

In cases where Assumption \ref{as:as2} does not hold, particle coverage can be considered an approximation to expected coverage.  Its key advantage is that computing $f_{\mathcal{B'}}$ does not require hypothetical belief updates, as one can iterate over the particle belief and simply count the number of particles that are covered by $\mathcal{A}$, making it practical for real-time applications. Moreover, it is a member of a class of coverage functions that are known to be submodular \citep{submodSurvey,textSumm} so we can employ greedy maximization to approximately maximize $f_{\mathcal{B'}}$. Our experiments show that $f_{\mathcal{B'}}$ is a good choice of utility function for selective detection in real time, leading to excellent tracking performance at a fraction of the computational cost.

Note that we formulate Assumption \ref{as:as2} merely for analysis purposes: to describe a set of cases in which particle coverage and expected coverage are identical.  Assumption \ref{as:as2} is not a restrictive condition for applying PartiMax, described below.  On the contrary, in the Experiments section we present excellent results for PartiMax on a real-life dataset for which Assumption \ref{as:as2} does not hold.

Furthermore, while we define particle coverage for the case of an unweighted particle filter, the concept is more general. In essence, the particle coverage of a pixel box is the cumulative probability mass concentrated on the states that are covered by the pixel box. Thus, any method that approximates a belief can be used to compute particle coverage by simply computing the probability mass concentrated on a set of states. For example, for a weighted particle filter, the particle coverage of a pixel box is just the sum of the weights of the particles covered by the pixel box. 

\vspace{-2mm}
\section{PARTIMAX}

In this section, we propose \emph{PartiMax}, which combines the complementary benefits of PCF and stochastic greedy maximization for selective detection. Moreover, rather than merely naively applying them together, we exploit the unique structure of PCF to develop a better approach for sampling pixel boxes that is guaranteed to sample pixel boxes with high coverage, thus offering a further increase in performance. PartiMax is based on the key insight that sampling pixel boxes with a probability that is directly proportional to their particle coverage leads to strong theoretical guarantees on the expected utility. Thus, we prove error bounds for PartiMax that are independent of the number of available pixel boxes $n$, the number of particles in the particle filter $m$, or the number of pixel boxes to be selected $k$.     

Greedy maximization and stochastic greedy maximization assume oracle access to the utility function and thus compute the marginal gain for every pixel box in every iteration. Generally, computing particle coverage function given a pixel box requires iterating over the particles to count how many fall in the space covered by the pixel box. Unlike greedy maximization, PartiMax does not explicitly compute particle coverage for each pixel box on the fly but instead maintains the particle coverage of each pixel box by updating it in every iteration. Using an approach inspired by \emph{tile coding} \citep{suttonBook}, a popular reinforcement learning technique for coding continuous state spaces, PartiMax is able to compute and maintain the particle coverage of every pixel box without having to visit $n$ pixel boxes or $m$ particles in every iteration. 

A tile coding consists of many \emph{tilings}. Each tiling is a set of \emph{tiles}, which in our setting are pixel boxes. The pixel boxes in a tiling partition the state space $S$, i.e., they are disjoint and completely cover $S$. For example, Figure~\ref{fig:tilings} shows two tilings in blue and yellow. Typically, different tilings have the same size pixel boxes but start at a fixed offset from each other, as in the figure. Since the pixel boxes in a given tiling form a partition, there is exactly one pixel box in each tiling that covers a given state $s'$. If we represent each tiling as an array, locating the pixel box that covers a given state $s'$ requires only simple arithmetic involving the size of the pixel boxes and the offset between the tilings. Figure~\ref{fig:tilings} highlights the two pixel boxes, one in each tiling, that cover a given state (red cross). Thus, by representing the entire space of pixel boxes as multiple tilings, the set of pixel boxes $\mathcal{T}_{s'}$ that covers a given state $s'$ can be identified in constant time. 

In reinforcement learning, tile codings are used to discretize continuous state spaces in order to approximate a value function. Here, we use it differently, just as a scheme for dividing an image into overlapping pixel boxes.  The benefit of this approach is that it enables PartiMax to maintain $\Delta_f$ efficiently, by providing constant-time access to the set $\mathcal{T}_{s'}$ of all pixel boxes that cover a given state $s'$, i.e., $\mathcal{T}_{s'} = \{i \in \mathcal{X}: \mbox{$i$ covers $s'$} \}$.

\begin{figure} 
\center \includegraphics[scale=0.26]{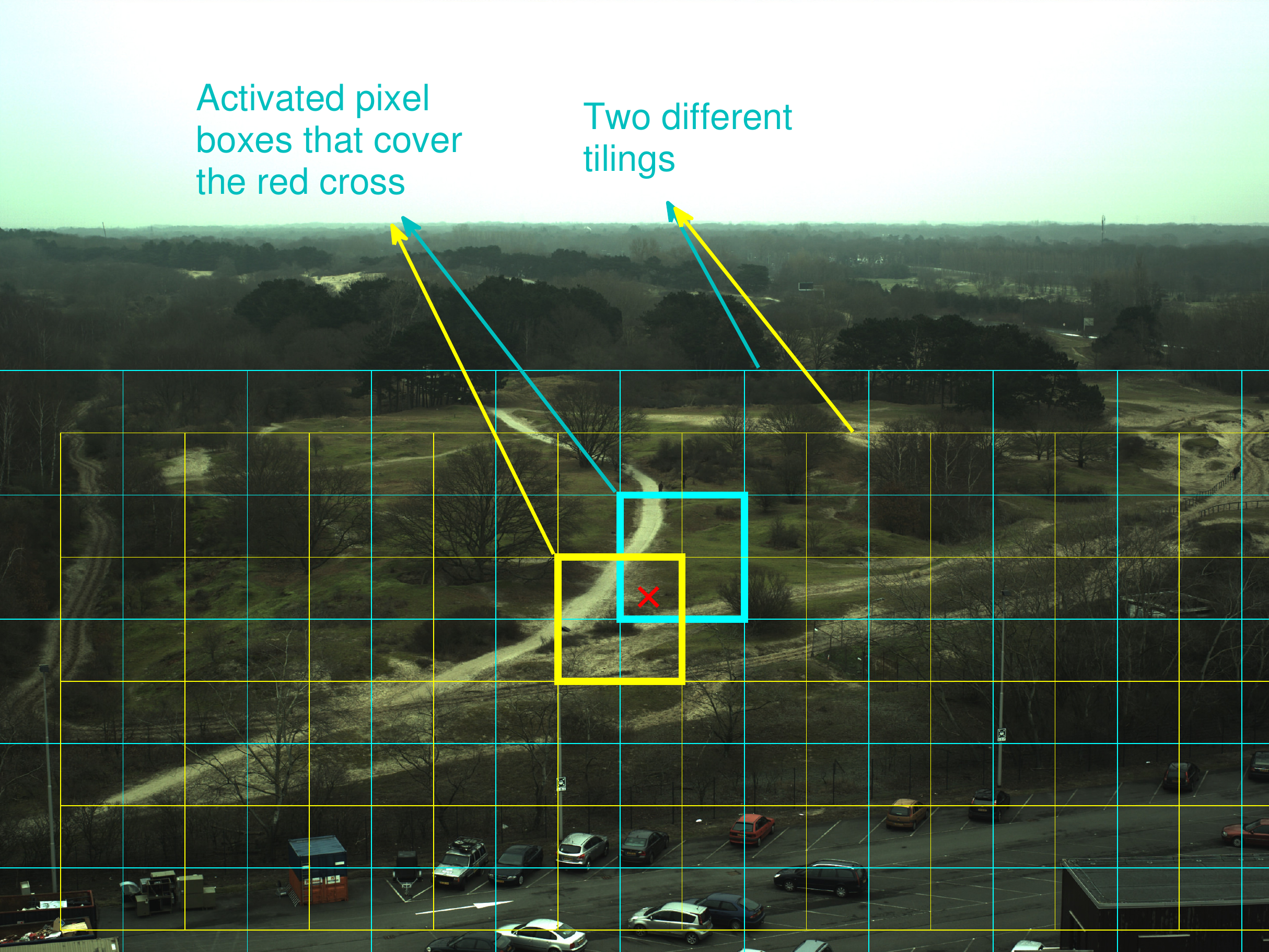}
\vspace{-2mm}
\caption{An example tile coding with two tilings. The highlighted tiles show the two pixel boxes that cover the red cross.}
\label{fig:tilings}
\end{figure}

\begin{algorithm}
\caption{$\mathtt{PartiMax}(\mathcal{B'},\mathcal{X},k)$}\label{PartiMaxMain}
\begin{algorithmic}[1]
\State $\langle \Phi, \Delta_{f} \rangle \gets \mathtt{initialize}(\mathcal{B'},\mathcal{X})$
\State $\mathcal{A}^{S} \gets \emptyset$.
\For {$l = 1 \ to \ k$}
\State $\mathcal{R} \gets \mathtt{sampleP}(r,\mathcal{B}',\mathcal{X},\mathcal{A}^{S})$
\State $i' \gets \argmax_{i \in \mathcal{R}}\Delta_{f}(i|\mathcal{A}^{S})$. \label{line:istar}
\State $\mathcal{A}^{S} \gets \mathcal{A}^{S} \cup i'$
\State $\langle \Delta_{f}, \Phi \rangle \gets \mathtt{update}(\Delta_{f},\Phi,i',\mathcal{A}^{S},\mathcal{X})$
\EndFor
\State return $\mathcal{A}^{S}$
\end{algorithmic}
\end{algorithm}

\begin{algorithm}
\caption{$\mathtt{initialize}(\mathcal{B'},\mathcal{X})$}\label{initialize}
\label{alg-inc}
\begin{algorithmic}[1]
\State $\Phi(i|\emptyset) \gets \emptyset \forall \ i \in \mathcal{X}$
\State $\Delta_f(i|\emptyset) \gets 0 \ \forall \ i \in \mathcal{X}$
\For {$s' \in \mathcal{B'}}$
\State $\mathcal{T}_{s'} \gets \mathtt{covers}(s')$
\State $\Phi(i|\emptyset) \gets \Phi(i|\emptyset) \cup \{s'\} \ \forall i \in \mathcal{T}_{s'}$ 
\State $\Delta_{f}(i|\emptyset) = \Delta_{f}(i|\emptyset) + 1  \ \forall i \in \mathcal{T}_{s'}$  
\EndFor
\State return $\langle \Phi, \Delta_{f} \rangle$
\end{algorithmic}
\end{algorithm}

Algorithm \ref{PartiMaxMain} shows pseudocode for PartiMax.  It starts by calling $\mathtt{initialize}$ (Algorithm \ref{alg-inc}), which returns two data structures, $\Phi$ and $\Delta_{f}$. $\Phi(i|\emptyset)$ stores for each $i$ the set of particles in $\mathcal{B}'$ that $i$ covers; and $\Delta_{f}(i|\emptyset)$ is the number of particles that are covered by $i$. For each particle $s' \in \mathcal{B'}$, $\mathtt{initialize}$ calls $\mathtt{covers}$, which uses the tile coding to find the set of pixel boxes $\mathcal{T}_{s'}$ that cover that particle. For every activated pixel box, $i \in \mathcal{T}_{s'}$, $\Delta_{f}(i|\emptyset)$ is incremented and $s'$ is added to the set of particles $\Phi(i|\emptyset)$. 

Once $\Phi$ and $\Delta_{f}$ are returned by $\mathtt{initialize}$, PartiMax proceeds like stochastic greedy maximization, adding in each iteration the pixel box $i'$ that maximizes the marginal gain from $\mathcal{R}$. Since going over all pixel boxes is too expensive, PartiMax calls Algorithm \ref{alg:sampleP2} to obtain $\mathcal{R}$, a subset of $\mathcal{X}$ of size $r$ ($r << n$). However, unlike stochastic greedy maximization, $\mathcal{R}$ is not sampled uniformly randomly but instead Algorithm \ref{alg:sampleP2} samples from a distribution such that the probability that $i$ is included in $\mathcal{R}$ is directly proportional to the particle coverage of $i$. 

\begin{algorithm}
\caption{$\mathtt{sampleP}(r,\mathcal{B}',\mathcal{X},\mathcal{A}^{S})$}  \label{alg:sampleP2}
\begin{algorithmic}[1] 
\State $\mathcal{R} \gets \emptyset$
\While {$|\mathcal{R}| < r$ } 
\State $s' \sim \mathrm{Unif}(\mathcal{B}')$  \label{line:sampleState}
\If{$s'$ is not covered by $\mathcal{A}^{S}$}   \label{line:rejectState}
\State $\mathcal{T}_{s'} \gets \mathtt{covers}(s')$ \label{line:sampleBox}
\State $i \sim \mathrm{Unif}(\mathcal{T}_{s'})$ // {\scriptsize uniformly random sample from $\mathcal{T}_{s'}$}
\State $\mathcal{R} \gets \mathcal{R} \cup i$
\EndIf
\EndWhile
\State return $\mathcal{R}$
\end{algorithmic}
\end{algorithm}

\begin{algorithm}
\caption{$\mathtt{update}(\Delta_{f},\Phi,i',\mathcal{A}^{S},\mathcal{X})$} \label{alg:update}
\label{alg-dec}
\begin{algorithmic}[1]
\For {$s' \in \Phi(i')$}
\State $\mathcal{T}_{s'} \gets \mathtt{covers}(s')$
\State $\Delta_{f}(i|\mathcal{A}^{S}) = \Delta_{f}(i|\mathcal{A}^{S}) - 1  \ \forall i \in \mathcal{T}_{s'}  $
\State $\Phi(i|\mathcal{A}^S) \gets \Phi(i|\mathcal{A}^S) \setminus s'  \ \forall i \in \mathcal{T}_{s'}$ 
\EndFor
\State return $\langle \Delta_{f}, \Phi \rangle$
\end{algorithmic}
\end{algorithm}

In general, sampling from such a distribution would be difficult, but with PCF we can do this efficiently. Algorithm \ref{alg:sampleP2} first uniformly randomly samples a particle from the belief. If the particle is not covered by $\mathcal{A}^{S}$, then it uses tile coding to find the set of pixel boxes $\mathcal{T}_{s'}$ that cover $s'$ and adds a pixel box uniformly randomly from $\mathcal{T}_{s'}$. This is repeated until $r$ pixel boxes are added to $\mathcal{R}$. 

At the end of each iteration, PartiMax calls $\mathtt{update}$ (Algorithm \ref{alg:update}), which updates $\Delta_{f}(i|\mathcal{A}^{S})$ and $\Phi(i|\mathcal{A}^{S})$ for every $i \in \cup_{s' \in \Phi(i'|\mathcal{A}^{S})} \mathcal{T}_{s'}$. It starts by iterating over the particles $s'$ in $\Phi(i'|\mathcal{A}^{S})$ and for each particle uses the tile coding to find $\mathcal{T}_{s'}$. For every pixel box $i \in \mathcal{T}_{s'}$,  $\Delta_{f}(i|\mathcal{A}^{S})$ is decremented and $s'$ is removed from $\Phi(i|\mathcal{A}^{S})$, to account for the fact that $i'$ now covers $s'$ and thus the marginal gain of $i$ is reduced. The marginal gain of every other $i$ remains unchanged. Similarly, $\Phi(i|\mathcal{A}^{S})$ is updated by subtracting $s'$ from $\Phi(i|\mathcal{A}^{S})$ for every $i$ in $\mathcal{T}_{s'}$. 

\vspace{-2mm}
\section{ANALYSIS}
We now establish bounds on the cumulative error of PartiMax that are independent of the problem size.  We start with a lemma that shows that the probability of adding $i$ to $\mathcal{R}$ via Algorithm \ref{alg:sampleP2} is directly proportional to the marginal gain of $i$.

\begin{lemma} \label{lem:proport}
Let $i = \mathtt{sampleP}(1,\mathcal{B}',\mathcal{X},\mathcal{A}^{S})$ then $\Pr_{\mathcal{A}^{S}}(i = i_j) = c\Delta(i_j|\mathcal{A}^{S})$, where $c = \frac{1}{tm'}$ is a constant where $t$ is the number of tilings and $m$ is the number of particles in $\mathcal{B}'$ that are not covered by $\mathcal{A}^S$.
\end{lemma}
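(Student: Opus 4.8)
The plan is to compute $\Pr_{\mathcal{A}^S}(i = i_j)$ directly by decomposing the single call $\mathtt{sampleP}(1,\mathcal{B}',\mathcal{X},\mathcal{A}^{S})$ into its two random choices — first the particle, then the pixel box — and applying the law of total probability. The algorithm repeatedly draws $s' \sim \mathrm{Unif}(\mathcal{B}')$ and rejects it (line~\ref{line:rejectState}) whenever $s'$ is already covered by $\mathcal{A}^{S}$, looping until an uncovered particle is accepted. First I would observe that this is precisely \emph{rejection sampling}: conditioned on the loop terminating with an accepted particle, the accepted $s'$ is distributed uniformly over the set $U$ of particles in $\mathcal{B}'$ that are \emph{not} covered by $\mathcal{A}^{S}$, so each such particle is returned with probability $1/m'$, where $m' = |U|$. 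The discarded draws carry no information and only rescale the acceptance probabilities uniformly.

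The second ingredient I would invoke is the partition property of the tile coding. Since each of the $t$ tilings partitions the state space $S$, every state $s'$ is covered by exactly one pixel box per tiling, hence $|\mathcal{T}_{s'}| = t$ for every $s'$. Therefore the uniform draw $i \sim \mathrm{Unif}(\mathcal{T}_{s'})$ on line~\ref{line:sampleBox} selects a fixed $i_j$ with probability $1/t$ when $i_j$ covers $s'$, and with probability $0$ otherwise.

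Combining these two facts, I would write
\begin{equation*}
\Pr_{\mathcal{A}^{S}}(i = i_j) = \sum_{s' \in U} \frac{1}{m'} \cdot \frac{\mathbf{1}[\,i_j \text{ covers } s'\,]}{t} = \frac{1}{t m'} \, \big|\{ s' \in U : i_j \text{ covers } s' \}\big|.
\end{equation*}
The final step is to identify the remaining cardinality with the marginal gain. Because $PCF_{\mathcal{B}'}$ counts the particles lying in the union of the covered sets, adding $i_j$ to $\mathcal{A}^{S}$ increases the count by exactly the number of particles that $i_j$ covers but $\mathcal{A}^{S}$ does not, i.e., $\Delta(i_j|\mathcal{A}^{S}) = \big|\{ s' \in U : i_j \text{ covers } s' \}\big|$. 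Substituting yields $\Pr_{\mathcal{A}^{S}}(i = i_j) = \frac{1}{t m'}\Delta(i_j|\mathcal{A}^{S}) = c\,\Delta(i_j|\mathcal{A}^{S})$ with $c = 1/(t m')$, as claimed.

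I expect the only genuinely delicate point to be the rejection-sampling step: one must argue carefully that looping until acceptance produces a \emph{uniform} conditional distribution over $U$ (and, implicitly, that $m' > 0$ so the loop terminates almost surely), rather than introducing bias from the discarded draws. The tile-coding count $|\mathcal{T}_{s'}| = t$ and the identification of the cardinality with $\Delta(i_j|\mathcal{A}^{S})$ are then essentially bookkeeping.
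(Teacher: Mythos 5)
Your proposal is correct and follows essentially the same route as the paper's own proof: decompose the draw into the particle-selection step and the uniform box-selection step, use the partition property of the tilings to get $|\mathcal{T}_{s'}| = t$, and combine to obtain $\frac{1}{t m'}\Delta_f(i_j|\mathcal{A}^S)$. If anything, your treatment is slightly more careful than the paper's, since you explicitly justify via rejection sampling that the accepted particle is uniform over the $m'$ uncovered particles (and note the need for $m' > 0$), whereas the paper's proof glosses over this and writes the factor loosely as $1/m$.
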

\begin{proof}
The probability that a given pixel box $i_j$ is sampled in $\mathcal{T}_{s'}$ (in line \ref{line:sampleBox}) is the number of particles covered by $i_j$ in $\mathcal{B}'$ that are not covered by $\mathcal{A}^{S}$, which is $\Delta_{f}(i_j|\mathcal{A}^{S})$:
\begin{equation}
\Pr(i_{j} \in \mathcal{T}_{s'}) = \frac{1}{m} \Delta_{f}(i_j | \mathcal{A}^{S}).
\end{equation}
Since there is exactly one pixel box that covers a given state in each of the $t$ tilings, the total number of pixel boxes, that is the size of $\mathcal{T}_{s'}$ is $t$. Since Algorithm \ref{alg:sampleP2} samples uniformly randomly from $\mathcal{T}_{s'}$, then the probability of selecting $i_j$ from $\mathcal{T}_{s'}$ is 
$\frac{1}{|\mathcal{T}_{s'}|} = \frac{1}{t}$. Thus,   
\begin{equation}
\Pr(i = i_{j}) = \frac{1}{t} \frac{1}{m} \Delta_{f}(i_j | \mathcal{A}^{S}). 
\end{equation}
\end{proof}

Next, we show that PartiMax is guaranteed to be near-optimal.  
\begin{theorem} \label{th:genOpt}
Let $F$ be a set function over a collection of sets $\mathcal{A}^+ = \{\mathcal{A}_1, \mathcal{A}_2 \dots \mathcal{A}_v \}$ and let $\mathcal{A}^{*} = \argmax_{\mathcal{A} \in \mathcal{A}^+}F(\mathcal{A})$, let $\mathcal{A}' = \arg\max_{\mathcal{A} \in \mathcal{R}} F(\mathcal{A})$, such that $\mathcal{R}$ is formed by sampling $r$ sets from a probability distribution such that probability of sampling $\mathcal{A}$ is $\Pr(\mathcal{A}) = \frac{1}{c}F(\mathcal{A})$, where $c$ is a scalar constant, such that, $\frac{c}{F(\mathcal{A}^*)}  - 1\leq r $. Then,
\begin{equation}
F(\mathcal{A}^*) - \mathbb{E}F(\mathcal{A}') \leq (\frac{r}{1+r})^r F(\mathcal{A}^*).
\end{equation}
\end{theorem}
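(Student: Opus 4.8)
The plan is to condition on the single event that matters: whether the global maximizer $\mathcal{A}^*$ lands in the sampled collection $\mathcal{R}$. The entire argument hinges on the observation that if $\mathcal{A}^* \in \mathcal{R}$, then since $\mathcal{A}'$ is defined as the best set in $\mathcal{R}$, we immediately have $F(\mathcal{A}') = F(\mathcal{A}^*)$; so the only way $\mathcal{A}'$ can underperform the optimum is when $\mathcal{A}^*$ is missed by all $r$ draws. The error thus collapses to the probability of missing $\mathcal{A}^*$, weighted by $F(\mathcal{A}^*)$.

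First I would record that because $\Pr(\mathcal{A}) = \frac{1}{c}F(\mathcal{A})$ is a genuine probability distribution, every $F(\mathcal{A})$ is non-negative; in particular $F(\mathcal{A}') \geq 0$ always, which is all we need in the ``bad'' case. Next, treating the $r$ samples forming $\mathcal{R}$ as \emph{independent} draws from $\Pr(\cdot)$, a single draw misses $\mathcal{A}^*$ with probability $1 - \Pr(\mathcal{A}^*) = 1 - \frac{F(\mathcal{A}^*)}{c}$, so all $r$ draws miss it with probability $\left(1 - \frac{F(\mathcal{A}^*)}{c}\right)^r$.

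Then I would apply the law of total expectation. Conditioned on $\mathcal{A}^* \in \mathcal{R}$ we have $F(\mathcal{A}') = F(\mathcal{A}^*)$ deterministically, while conditioned on $\mathcal{A}^* \notin \mathcal{R}$ we discard the term using only $F(\mathcal{A}') \geq 0$. This yields $\mathbb{E}F(\mathcal{A}') \geq \Pr(\mathcal{A}^* \in \mathcal{R}) \, F(\mathcal{A}^*) = \left(1 - \left(1 - \frac{F(\mathcal{A}^*)}{c}\right)^r\right) F(\mathcal{A}^*)$, and rearranging gives $F(\mathcal{A}^*) - \mathbb{E}F(\mathcal{A}') \leq \left(1 - \frac{F(\mathcal{A}^*)}{c}\right)^r F(\mathcal{A}^*)$.

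Finally I would invoke the hypothesis $\frac{c}{F(\mathcal{A}^*)} - 1 \leq r$, which rearranges to $\frac{F(\mathcal{A}^*)}{c} \geq \frac{1}{1+r}$, hence $1 - \frac{F(\mathcal{A}^*)}{c} \leq \frac{r}{1+r}$; raising to the $r$-th power and substituting closes the proof. The only genuinely delicate point is the sampling model: the clean product form $\left(1 - \frac{F(\mathcal{A}^*)}{c}\right)^r$ presupposes that the $r$ elements of $\mathcal{R}$ are drawn independently (with replacement) from $\Pr(\cdot)$, which I would state explicitly, since sampling without replacement would alter the miss probability and break the factorization. Everything else is elementary, so I anticipate no further obstacles.
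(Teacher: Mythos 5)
Your proposal is correct and follows essentially the same route as the paper: both hinge on bounding $\mathbb{E}F(\mathcal{A}')$ from below by $\bigl(1-(1-\Pr(\mathcal{A}^*))^r\bigr)F(\mathcal{A}^*)$, i.e., by discarding every contribution except the event that $\mathcal{A}^*$ itself is drawn. The only difference is the finish: the paper maximizes $(1-p_1)^r c\,p_1$ over $p_1$ by calculus and then invokes the hypothesis on $r$, whereas you convert the hypothesis directly into $1 - F(\mathcal{A}^*)/c \leq r/(r+1)$ and raise to the $r$-th power --- a slightly cleaner, equally valid ending that also makes explicit the with-replacement independence assumption the paper leaves implicit.
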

\begin{proof}
Let $p_1, p_2 \dots p_v$ denote $P(\mathcal{A}_1), P(\mathcal{A}_2) \dots P(\mathcal{A}_v)$ respectively. Also without loss of generality, we assume $p_1 \geq p_2 \geq \dots p_v$. Consequently, it follows, $F(\mathcal{A}_1) \geq F(\mathcal{A}_2) \geq \dots \geq F(\mathcal{A}_v) $. Note $\mathcal{A}^* = \mathcal{A}_1$.
The expected value of $F(\mathcal{A}')$ is at least as much as: 
\begin{equation}
\mathbb{E}[F(\mathcal{A}')] \geq (1 - (1 - p_1)^r)F(\mathcal{A}_1). 
\end{equation}
The term on the right corresponds to the case, when $\mathcal{A}_1$ is sampled at least once in $\mathcal{R}$, then we are guaranteed to get $\mathcal{A}' = \mathcal{A}_1$. The rest of the cases when $\mathcal{A}_1$ is not sampled in $\mathcal{R}$, we ignore, thus giving us the above bound on $\mathbb{E}[F(\mathcal{A}')]$.
Thus, 
\begin{equation}
F(\mathcal{A}_1) - \mathbb{E}[F(\mathcal{A}')] \leq F(\mathcal{A}_1) - (1 - (1 - p_1)^r)F(\mathcal{A}_1).  \nonumber
\end{equation}
Since $cp_1 = F(\mathcal{A}_1)$, the above equation can be written as: 
\begin{equation}
F(\mathcal{A}_1) - \mathbb{E}[F(\mathcal{A}')] \leq c p_1 - (1 - (1 - p_1)^r)c p_1.
\end{equation}
On differentiating the right hand side with respect to $p_1$ and equating it to zero, we find that the maxima of right hand size occurs at $p_1 = (1/(r+1))$. Also, since $r \geq \frac{c}{F(\mathcal{A}^*)} - 1$, substituting this in the above equation we get, 
\begin{equation}
F(\mathcal{A}_1) - \mathbb{E}[F(\mathcal{A}')] \leq (r/(r+1))^r F(\mathcal{A}_1).  \nonumber
\end{equation}
\end{proof}

The above theorem\footnote{This is a corrected version of the original UAI paper available at: http://auai.org/uai2017/proceedings/papers/130.pdf. The condition on $r \geq \frac{c}{F(\mathcal{A}^*)} - 1$ is missing from the original version.} guarantees that, granted access to a probability distribution such that $\Pr(\mathcal{A}) = cF(\mathcal{A})$, there exists a tight theoretical guarantee for selecting $\mathcal{A}' = \arg\max_{\mathcal{A} \in \mathcal{R}} F(\mathcal{A})$, independent of the problem size.
Directly applying Theorem \ref{th:genOpt} and Lemma \ref{lem:proport} yields the following lemma, which shows that the marginal gain of PartiMax $\Delta_f(i'|\mathcal{A}^S)$ in each iteration is at least $(1 - (\frac{r}{r+1})^r) \Delta_f(i^*|\mathcal{A}^S)$, where $i^* = \argmax_{i \in \mathcal{X}\setminus\mathcal{A}^S}\Delta(i|\mathcal{A}^S)$ and $r \geq \frac{tm}{2} - 1$.

\begin{lemma} \label{oneItGain}
Let $i^* = \arg\max_{i \in \mathcal{X} \setminus \mathcal{A}^{S}} \Delta_{f}(i|\mathcal{A}^{S})$, $r \geq \frac{tm}{2} - 1$ and let $i' = \argmax_{i \in \mathcal{R}} \Delta_{f}(i|\mathcal{A}^{S})$, where $\mathcal{R} = \mathtt{sampleP(r,\mathcal{B}',\mathcal{X},\mathcal{A}^{S})}$, $r \geq \frac{tm}{2} - 1$, then,
\begin{equation}
\Delta_{f}(i^*|\mathcal{A}^{S}) - \mathbb{E}\Delta_{f}(i'|\mathcal{A}^{S}) \leq (\frac{r}{r+1})^{r}\Delta_{f}(i^*|\mathcal{A}^{S}).  \nonumber
\end{equation}
\end{lemma}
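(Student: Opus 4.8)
The plan is to obtain Lemma \ref{oneItGain} as a direct specialization of Theorem \ref{th:genOpt}, using Lemma \ref{lem:proport} to supply the sampling distribution that the theorem requires. The correspondence I would set up is to let the ``sets'' $\mathcal{A}_1, \dots, \mathcal{A}_v$ of Theorem \ref{th:genOpt} be the singleton pixel boxes $i \in \mathcal{X} \setminus \mathcal{A}^{S}$, and to let the abstract set function $F$ play the role of the marginal gain, $F(i) = \Delta_f(i|\mathcal{A}^{S})$. Under this identification $\mathcal{A}^*$ becomes $i^* = \argmax_i \Delta_f(i|\mathcal{A}^{S})$ and $\mathcal{A}'$ becomes $i' = \argmax_{i \in \mathcal{R}} \Delta_f(i|\mathcal{A}^{S})$, so the conclusion of Theorem \ref{th:genOpt} reads verbatim as the inequality we want. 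The entire task therefore reduces to checking that the two hypotheses of Theorem \ref{th:genOpt} hold for this choice.

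For the distributional hypothesis, I would invoke Lemma \ref{lem:proport}: a single call $\mathtt{sampleP}(1,\mathcal{B}',\mathcal{X},\mathcal{A}^{S})$ returns pixel box $i_j$ with probability $c_0\,\Delta_f(i_j|\mathcal{A}^{S})$, where $c_0 = \frac{1}{tm'}$, with $t$ the number of tilings and $m'$ the number of particles in $\mathcal{B}'$ not covered by $\mathcal{A}^{S}$. This is exactly the form $\Pr(\mathcal{A}) = \frac{1}{c}F(\mathcal{A})$ demanded by Theorem \ref{th:genOpt}, with constant $c = tm'$, and $\mathcal{R}$ is assembled from $r$ such draws. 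I would also remark that $\mathtt{sampleP}$ accumulates $r$ \emph{distinct} boxes rather than $r$ independent draws, but since the theorem's bound only uses the event that $i^*$ is sampled at least once, collapsing duplicates can only increase that probability, so the stated inequality remains valid.

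The remaining hypothesis to verify is the precondition $\frac{c}{F(\mathcal{A}^*)} - 1 \leq r$, which here becomes $\frac{tm'}{\Delta_f(i^*|\mathcal{A}^{S})} - 1 \leq r$. I expect this to be the main obstacle, since it is the step that converts the abstract condition of Theorem \ref{th:genOpt} into the clean, problem-independent form $r \geq \frac{tm}{2} - 1$ asserted in the lemma. The plan is to bound $\frac{tm'}{\Delta_f(i^*|\mathcal{A}^{S})}$ from above using two facts: that the uncovered count never exceeds the total count, $m' \leq m$, and a lower bound $\Delta_f(i^*|\mathcal{A}^{S}) \geq 2$ on the coverage of the maximal box (in the regime of interest the best pixel box covers at least two of the still-uncovered particles). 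Together these give $\frac{tm'}{\Delta_f(i^*|\mathcal{A}^{S})} \leq \frac{tm'}{2} \leq \frac{tm}{2}$, so the assumed $r \geq \frac{tm}{2} - 1$ indeed implies the precondition. This is the only place where genuine care is needed, since a too-loose lower bound on $\Delta_f(i^*|\mathcal{A}^{S})$ would fail to recover the factor $\tfrac{1}{2}$.

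With both hypotheses established, the final step is merely to apply Theorem \ref{th:genOpt} and read off $\Delta_f(i^*|\mathcal{A}^{S}) - \mathbb{E}\Delta_f(i'|\mathcal{A}^{S}) \leq (\frac{r}{r+1})^r \Delta_f(i^*|\mathcal{A}^{S})$. Apart from the precondition check, every step is a relabeling of the already-proved Theorem \ref{th:genOpt} and Lemma \ref{lem:proport}.
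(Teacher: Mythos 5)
Your proposal is correct and takes essentially the same route as the paper: the paper's entire proof is exactly the one-line specialization of Theorem \ref{th:genOpt} via Lemma \ref{lem:proport}, with $\mathcal{X}\setminus\mathcal{A}^{S}$ playing the role of $\mathcal{A}^{+}$, $i^*$ of $\mathcal{A}^{*}$, and $i'$ of $\mathcal{A}'$. The only difference is that you explicitly verify the precondition $\frac{c}{F(\mathcal{A}^*)} - 1 \leq r$, which the paper skips silently; your check usefully exposes that the stated hypothesis $r \geq \frac{tm}{2}-1$ implies this precondition only under the additional, unstated assumption $\Delta_{f}(i^*|\mathcal{A}^{S}) \geq 2$ (the best remaining pixel box covers at least two still-uncovered particles), an assumption the paper implicitly makes as well.
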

\begin{proof}
Using $\mathcal{X} \setminus \mathcal{A}^{S}$ as $\mathcal{A}^+$, $i^*$ as $\mathcal{A}^*$, $i'$ as $\mathcal{A}'$ and applying Theorem \ref{th:genOpt} and Lemma \ref{lem:proport} yields the desired result.
\end{proof}
\vspace{-4mm}
Lemma \ref{oneItGain} in turn yields the following theorem for $r \geq \frac{tm}{2} - 1$: 
\begin{theorem}
\begin{equation}
\mathbb{E} [f(\mathcal{A}^S)] \geq (1 - e^{-1} - (r/(r+1)^r))f(\mathcal{A}^*).
\end{equation}
\end{theorem}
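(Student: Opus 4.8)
The plan is to reduce the statement to the classical greedy-maximization recurrence, with the single modification that each iteration of PartiMax only secures, in expectation, a $(1-\beta)$ fraction of the best available marginal gain, where $\beta = (r/(r+1))^r$. Write $\mathcal{A}^S_l$ for the partial solution after $l$ iterations of the loop in Algorithm \ref{PartiMaxMain} (so $\mathcal{A}^S_0 = \emptyset$ and $\mathcal{A}^S_k = \mathcal{A}^S$), and set $\delta_l = f(\mathcal{A}^*) - \mathbb{E}[f(\mathcal{A}^S_l)]$. Since $f = f_{\mathcal{B}'}$ is a coverage function, $f(\emptyset) = 0$, so $\delta_0 = f(\mathcal{A}^*)$, and the target $\mathbb{E}[f(\mathcal{A}^S)] \geq (1 - e^{-1} - \beta)f(\mathcal{A}^*)$ is equivalent to $\delta_k \leq (e^{-1} + \beta)f(\mathcal{A}^*)$.

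The engine of the argument is a one-step contraction on $\delta_l$. First I would invoke monotonicity and submodularity of $f$, exactly as in the proof of Theorem \ref{th:nemhauser}: for any fixed realization of $\mathcal{A}^S_l$,
\[
f(\mathcal{A}^*) \leq f(\mathcal{A}^* \cup \mathcal{A}^S_l) \leq f(\mathcal{A}^S_l) + \sum_{i \in \mathcal{A}^* \setminus \mathcal{A}^S_l}\Delta_f(i|\mathcal{A}^S_l) \leq f(\mathcal{A}^S_l) + k\,\Delta_f(i^*|\mathcal{A}^S_l),
\]
where $i^* = \argmax_{i \in \mathcal{X}\setminus\mathcal{A}^S_l}\Delta_f(i|\mathcal{A}^S_l)$; this rearranges to $\Delta_f(i^*|\mathcal{A}^S_l) \geq \frac{1}{k}\bigl(f(\mathcal{A}^*) - f(\mathcal{A}^S_l)\bigr)$. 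Conditioning on $\mathcal{A}^S_l$ and applying Lemma \ref{oneItGain} to the box $i'$ actually selected gives $\mathbb{E}[\Delta_f(i'|\mathcal{A}^S_l)\mid \mathcal{A}^S_l] \geq (1-\beta)\Delta_f(i^*|\mathcal{A}^S_l) \geq \frac{1-\beta}{k}\bigl(f(\mathcal{A}^*) - f(\mathcal{A}^S_l)\bigr)$. Because $f(\mathcal{A}^S_{l+1}) - f(\mathcal{A}^S_l) = \Delta_f(i'|\mathcal{A}^S_l)$, taking the full expectation through the tower property yields the recurrence $\delta_{l+1} \leq \bigl(1 - \frac{1-\beta}{k}\bigr)\delta_l$.

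Finally I would unroll the recurrence from $l=0$ to $l=k$ to get $\delta_k \leq \bigl(1 - \frac{1-\beta}{k}\bigr)^k f(\mathcal{A}^*)$, bound it with $1-x \leq e^{-x}$ to obtain $\delta_k \leq e^{-(1-\beta)}f(\mathcal{A}^*)$, and then apply the scalar inequality $e^{-(1-\beta)} \leq e^{-1} + \beta$ (which follows from $e^{\beta} - 1 \leq e\beta$ on $[0,1]$, noting $\beta = (r/(r+1))^r \leq 1/2 < 1$ for every integer $r \geq 1$), delivering $\delta_k \leq (e^{-1}+\beta)f(\mathcal{A}^*)$ and hence the claim. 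The main obstacle is bookkeeping the randomness correctly: Lemma \ref{oneItGain} is stated for a \emph{fixed} conditioning set $\mathcal{A}^S$, whereas inside the loop $\mathcal{A}^S_l$ is itself random, so the per-step inequality must be established conditionally and pushed through the tower property before the deterministic recurrence can be unrolled. The only other nonroutine point is verifying the final scalar inequality and confirming $\beta < 1$, so that the contraction factor $1-\frac{1-\beta}{k}$ is genuinely below one.
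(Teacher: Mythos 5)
Your proof is correct and takes essentially the same route as the paper: the same Nemhauser-style monotonicity/submodularity bound $\Delta_f(i^*|\mathcal{A}^S_l) \geq \frac{1}{k}\bigl(f(\mathcal{A}^*) - f(\mathcal{A}^S_l)\bigr)$ combined with Lemma \ref{oneItGain} at each iteration, followed by the induction that the paper itself only sketches by deferring to its citations. If anything, your write-up is the more careful one: the multiplicative contraction $\delta_{l+1} \leq \bigl(1 - \frac{1-\beta}{k}\bigr)\delta_l$ (handled conditionally and pushed through the tower property) plus the scalar bound $e^{-(1-\beta)} \leq e^{-1} + \beta$ is exactly what is needed to reach the stated constant, whereas the paper's additive per-step inequality, which keeps the error term $\bigl(\frac{r}{r+1}\bigr)^r\Delta_f(i^*|\mathcal{A}^S_m)$ separate, would inflate the error to roughly $k\beta$ if unrolled naively.
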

\vspace{-4mm}
\begin{proof} 
Let $\mathcal{A}^{*} = \{i_1^*, i_{2}^{*}, \dots ,i_k^{*} \}$ and $\mathcal{A}^{S}_m = \{i^{S}_1, i^{S}_2, \dots ,i^{S}_m \}$ be the solution returned by PartiMax after $m \leq k$ iterations. Let $\mathcal{A}^{P} = \mathcal{A}^{*} \setminus \mathcal{A}_{m}^{S} =\{i^{P}_1, \dots ,i^{P}_j\} $ and let $\mathcal{A}^{P}_l$ be the first $l$ elements of $\mathcal{A}^{P}$, with $\mathcal{A}^{P}_0 = \emptyset$. Note that $\mathbb{E}f(\mathcal{A}^{*} \cup \mathcal{A}^{S}_{m})$ can be expressed as: 
\begin{equation}
\mathbb{E}[f(\mathcal{A}^{*} \cup \mathcal{A}^{S}_{m})] = \mathbb{E}[f(\mathcal{A}^{S}_m)] + \sum_{l = 1}^{j}\mathbb{E}[\Delta_{f}(i^{P}_l|\mathcal{A}^{S}_m \cup \mathcal{A}^{P}_{l-1} \})].   \nonumber
\end{equation}
$f$ is monotonic, $\mathbb{E}[f(\mathcal{A}^{*} \cup \mathcal{A}^{S}_{m})] \geq \mathbb{E}[f(\mathcal{A}^{*})]$, and by submodularity, 
$\sum_{l=1}^{j}\mathbb{E}[\Delta_{f}(i^{P}_{l}|\mathcal{A}^{S}_m)]  \geq \sum_{l = 1}^{j}\mathbb{E}[\Delta_{f}(i^{P}_l|\mathcal{A}^{S}_m \cup \mathcal{A}^{P}_{l-1} \})]. $ Thus,
\begin{equation} \label{eq:p2}
\mathbb{E}[f(\mathcal{A}^{S}_m)] + \sum_{i \in \mathcal{A}^P}\mathbb{E}[\Delta_{f}(i|\mathcal{A}^{S}_m)] \geq f(\mathcal{A}^{*}).
\end{equation}
From Lemma \ref{oneItGain}, $\mathbb{E}[f(\mathcal{A}^{S}_{m+1}) - f(\mathcal{A}^{S}_{m})] \geq \Delta_{f}(i^*|\mathcal{A}^{S}_{m}) - (\frac{r}{r+1})^{r}\Delta_{f}(i^*|\mathcal{A}^{S}_{m})$. Also, since $|\mathcal{A}^{P}| \leq k$, 

\begin{equation}
\begin{split}
\mathbb{E}[f(\mathcal{A}^{S}_{m+1}) - f(\mathcal{A}^{S}_{m})] + (\frac{r}{r+1})^{r}\Delta_{f}(i^*|\mathcal{A}^{S}_m) )  \\  \geq \frac{1}{k} [f(\mathcal{A}^{*}) - \mathbb{E}[f(\mathcal{A}^{S}_m)] ].
\end{split}
\end{equation}
By induction on $m$ the desired result can be obtained. (\citep{submodSurvey, lazier, satsangi15})
\end{proof}

The above theorem establishes a bound on the error of PartiMax that is independent of the size of the problem and thus remains tight even for large values of $n$. Furthermore, the above result shows that, as the size of $\mathcal{R}$ increases, PartiMax's performance is guaranteed to converge to that of greedy maximization. 

While we have shown these results for PartiMax for selective detection, Theorem \ref{th:genOpt} is applicable to any problem that involves maximization over a set function where we can sample from a probability distribution such that the probability of sampling a subset $\mathcal{A}$ is directly proportional to the value of that subset specified by the set function $F$. Also note that Theorem \ref{th:genOpt} does not make any assumptions about $F$ and is applicable to any set function, submodular or not. 

\section{EXPERIMENTS}
\label{sec:exp}
\vspace{-2mm}

We evaluated PartiMax on a dataset containing approximately 2100 trajectories of people recorded by a camera taking $5120 \times 3840$ resolution images running at 6 frames per second \citep{Schutte2016}. The trajectories were generated using the \emph{ACF} detector \citep{Dollar2014} and in-camera tracking \citep{Schutte2016}. These tracks were used to learn the \emph{motion model} of the people walking in the scene, as described below. 

We model the state $s$ as the person's position and velocity, $s = \langle x,y,v_x,v_y \rangle$, where $x$ and $y$ describe position and $v_x$ and $v_y$ describe velocity.  Both $x$ and $y$ are integers in $\{0,\ldots,5000\}$. We use a motion model that predicts the next position as: 
\begin{equation}
	x_{next} = x_{curr} + v_{x}^{curr} + \mathcal{N}(0,\sigma_{x}^2),
\end{equation}
for $x$ and analogously for $y$. We use a maximum likelihood estimate of $\sigma_x^2$ learned from the data. 

Each pixel box was $180 \times 180$ and each tiling had a $60 \times 30$ offset from the previous one. This offset was chosen because it is the size of the average \emph{bounding box} required to bound a detected person in the scene. This setup yields approximately 7000 pixel boxes per image. 

We assume access to a detector that determines with 90\% accuracy whether a person is located within a given pixel box and gives a noisy observation about the location of the person if detected. Using the motion model and this detector, we maintain a particle belief $\mathcal{B}$ about the person's location using an unweighted particle filter with 250 particles. Multi-person tracking uses a separate particle filter for each person. 

In our experiments, each algorithm selects $k$ pixel boxes to which to apply the detector. To evaluate its performance, we sample a test trajectory from the dataset and try to track the person's movement, starting with a random belief $\mathcal{B}$ and updating it at each timestep using the observations generated from the selected pixel boxes. At each timestep, the agent is asked to predict the position of the person in the scene and gets a reward of +1 for correct predictions and 0 otherwise. Performance is quantified as the total cumulative reward aggregated by the agent at the end of a trajectory over a series of 50 timesteps. 
The experiments were run for over 140 trajectories for 8 independent runs for 1 person tracking and 6 independent runs for 3 and 5 person tracking. 

As a baseline, we compare against an efficient version of greedy maximization (GM+PCF) (in red in plots) that employs tile coding to maintain the particle coverage of each pixel box. GM+PCF is the same as PartiMax but, instead of selecting the pixel box with the highest particle coverage, in each iteration from $\mathcal{R}$, GM+PCF selects it from $\mathcal{X}$. A naive implementation of greedy maximization that computes the particle coverage of each pixel box in every iteration by going over the entire belief was too slow for a complete run and required 160 seconds to select $k=40$ from $n=7200$ for one-person tracking. GM+PCF returns the same solution as greedy maximization but is faster. Simple baselines like downsampling are not useful as the tracking system must still process thousands of pixel boxes even if the image is downsampled by a factor of 4 or 8. Furthermore, downsampling precludes detection of high level features about the person like the color of his/her clothes, etc., thus defeating the purpose of deploying a high resolution camera.

\begin{figure}[t]
\vspace{-2mm}
\includegraphics[scale=0.27]{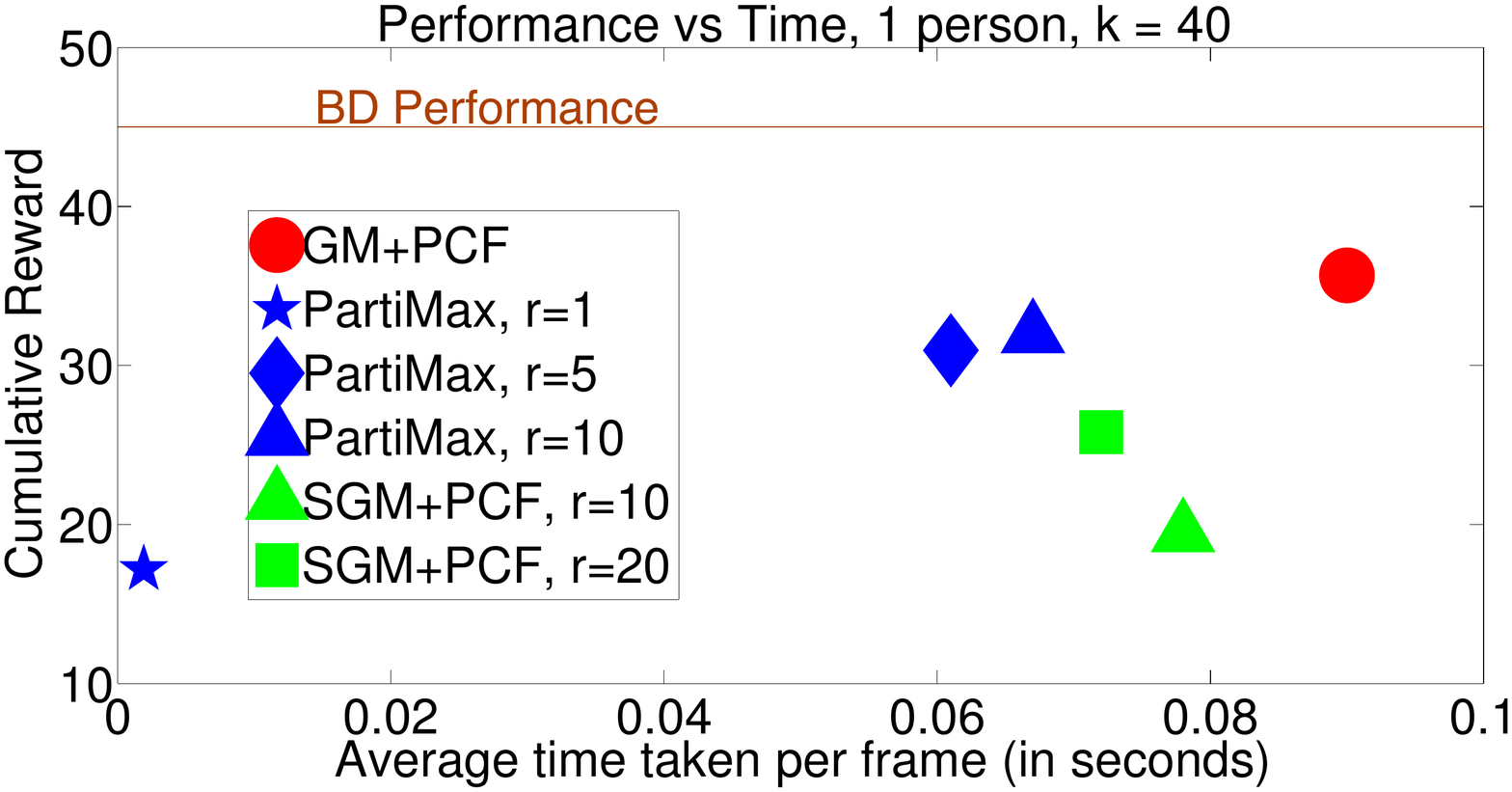} 
\includegraphics[scale=0.27]{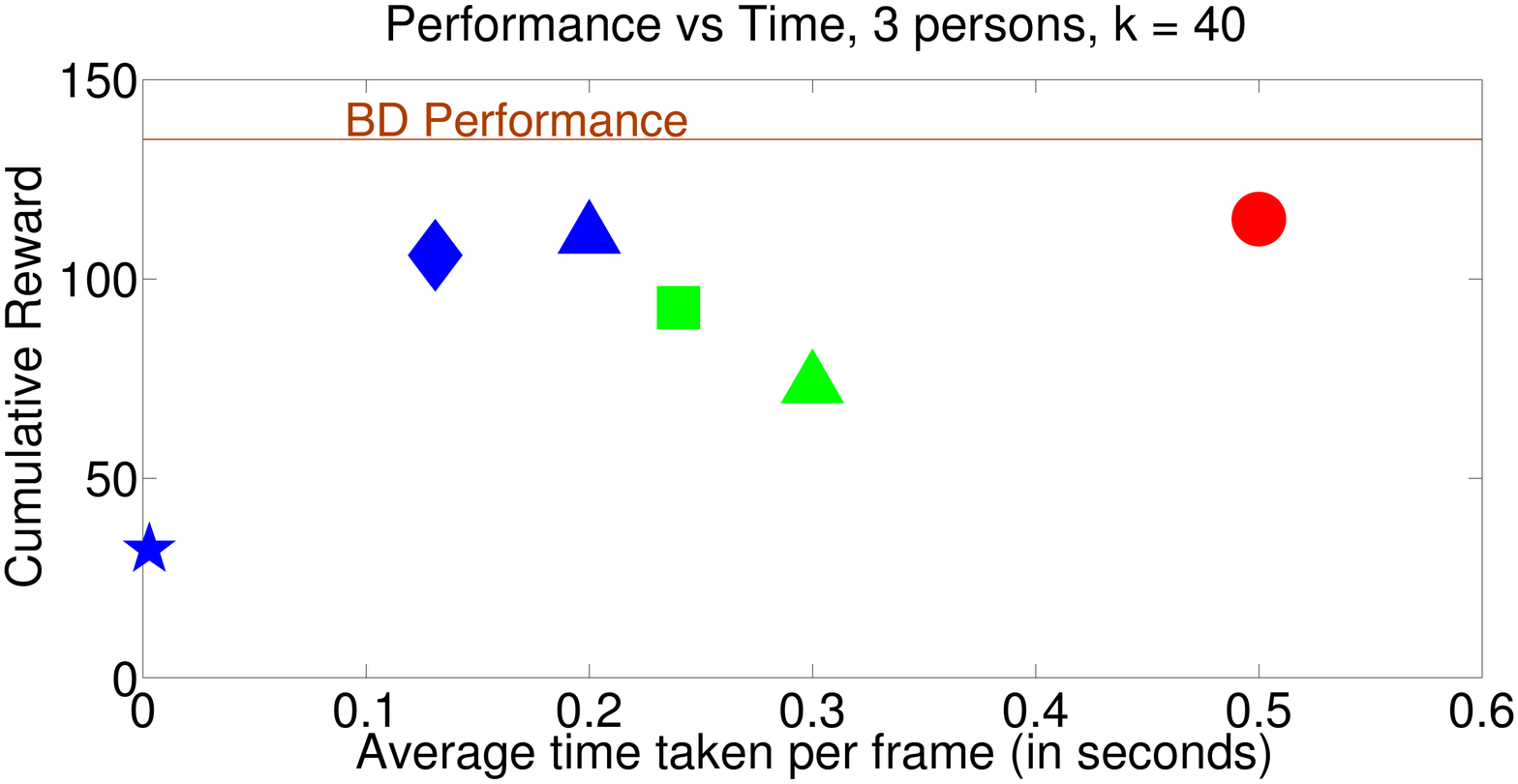} 
\includegraphics[scale=0.27]{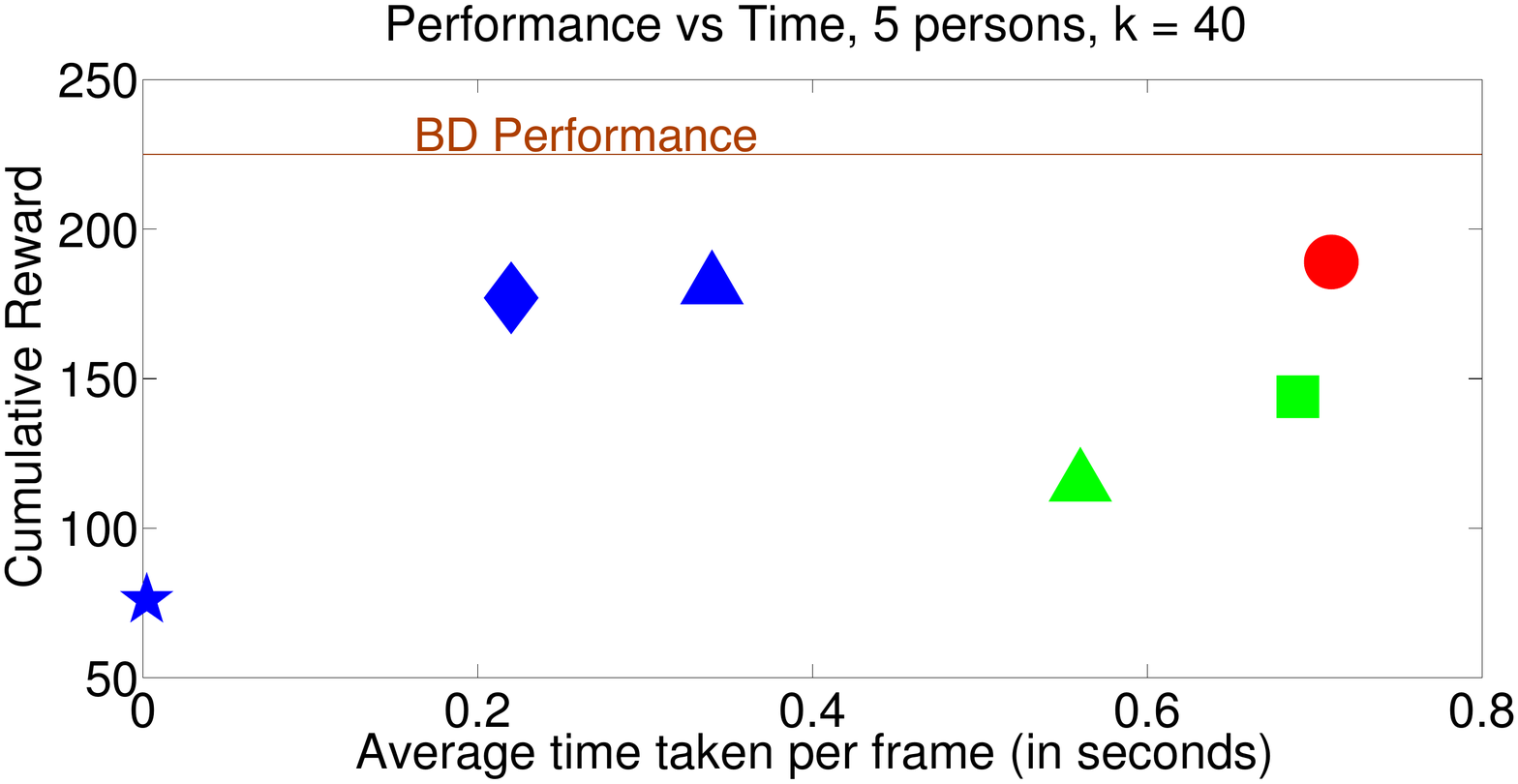}
\vspace{-2mm} 
\caption{Total correct predictions vs.\ CPU time (seconds) for tracking one (top), three (middle) and five  (bottom) people. The closer to the top-left corner, the better.} \label{fig:exps}
\vspace{-4mm}
\end{figure} 

We also compare to stochastic greedy maximization (in green in the plots) that randomly samples a subset $\mathcal{R}$ from $\mathcal{X}$ but employs tile coding to maintain the particle coverage of each pixel box. A naive implementation of stochastic greedy maximization that computes particle coverage of each pixel box from scratch takes around 0.83 seconds for $k=40$ and $r=10$ for one person tracking. The combination of SGM + PCF returns the same solution as stochastic greedy maximization, but faster. 

Figure \ref{fig:exps} shows a detailed comparison between PartiMax, greedy maximization, and stochastic greedy maximization when tracking 1, 3, or 5 people with $k=40$. The $y$-axis shows the cumulative correct predictions averaged over multiple trajectories that the agent made using observations from each algorithm and the $x$-axis shows the time taken by each algorithm to select $40$ out of $7200$ pixel boxes. Thus, the top left corner indicates good tracking performance at a low computational cost. The brown line in the figure shows the tracking performance when the brute force detection is used, that is the person detector is applied to the entire image (except the part containing sky), which takes approximately 2.5 seconds. 

The blue diamond and triangle at the top left corner of each plot show the superior performance and computational efficiency of PartiMax compared to the baselines. PartiMax not only matches the performance of greedy maximization, it does so extremely efficiently with a low value of $r$, thanks to the sampling scheme we propose. Stochastic greedy maximization's tracking performance suffers due to its random sampling, while the computational cost of GM+PCF increases with the number of people. PartiMax combines the best of both of these baselines and performs better both in terms of tracking performance and computational cost. In fact, as the number of people in the scene increases, PartiMax scales much better than any other algorithm. Overall, PartiMax is able to retain 80\% percent of BD's tracking performance but is at least 10 times faster.


\section{CONCLUSIONS \& FUTURE WORK}
This paper proposed a new tracking system that selectively processes only a fraction of an image to track people in real time. We proposed a new algorithm PartiMax that exploits submodularity to quickly identify the most relevant regions in an image. We applied our tracking system to a real-life dataset and showed that it retains 80\% of tracking performance even while processing only a fraction of each image and running in real time. In future we plan to apply PartiMax to sensor selection tasks and other applications that involves maximizing coverage functions.
\vspace{-2mm}
\subsubsection*{Acknowledgements}
 We thank TNO for providing us with the dataset used in our experiments.  We also thank the STW User Committee for its advice regarding active perception for tracking systems. This research is supported by the Dutch Technology Foundation STW (project \#12622), which is part of the Netherlands Organisation for Scientific Research (NWO), and which is partly funded by the Ministry of Economic Affairs.
Frans Oliehoek is funded by NWO Innovational Research Incentives Scheme Veni \#639.021.336.

\bibliographystyle{plainnat}
\bibliography{RealTimeResourceAlloc}

\end{document}